\begin{document}

\runningauthor{A. Scheid, E. Boursier, A. Durmus, M.I. Jordan, P. Ménard, E. Moulines, M. Valko}

\twocolumn[

\aistatstitle{Optimal Design for Reward Modeling in RLHF}

\aistatsauthor{Antoine Scheid \And Etienne Boursier \And Alain Durmus}

\aistatsaddress{  CMAP - CNRS \\
\'Ecole polytechnique \\
Palaiseau, France
\And  INRIA Saclay
\\ Université Paris Saclay, LMO \\
Orsay, France
\And  CMAP - CNRS \\
\'Ecole polytechnique \\
Palaiseau, France} 

\aistatsauthor{Michael I. Jordan \And Pierre Ménard \And Eric Moulines \And Michal Valko}

\aistatsaddress{U.C., Berkeley \\
INRIA, ENS
\\
Paris, France
\And 
ENS Lyon
\\
Lyon, France
\And 
CMAP - CNRS \\
\'Ecole polytechnique
\\
Palaiseau, France
\And
INRIA}]

\begin{abstract}
     Reinforcement Learning from Human Feedback (RLHF) has become a popular approach to align language models (LMs) with human preferences. This method involves collecting a large dataset of human pairwise preferences across various text generations and using it to infer (implicitly or explicitly) a reward model. Numerous methods have been proposed to learn the reward model and align a LM with it. However, the costly process of collecting human preferences has received little attention and could benefit from theoretical insights. This paper addresses this issue and aims to formalize the reward training model in RLHF. We frame the selection of an effective dataset as a simple regret minimization task, using a linear contextual dueling bandit method. Given the potentially large number of arms, this approach is more coherent than the best-arm identification setting. We then propose an offline framework for solving this problem. Under appropriate assumptions — linearity of the reward model in the embedding space, and boundedness of the reward parameter — we derive bounds on the simple regret. Finally, we provide a lower bound that matches our upper bound up to constant and logarithmic terms. To our knowledge, this is the first theoretical contribution in this area to provide an offline approach as well as worst-case guarantees.
\end{abstract}

\section{Introduction}


In learning from human feedback \citep{christiano2017deep, naveed2023comprehensive, wei2023overview}, an agent learns to act based on a preference signal. This subject has recently seen a surge of interest due to its effectiveness for aligning pre-trained Large Language Models (LLMs) with human preferences. Typically, the human feedback is gathered by constructing a large dataset of contexts (prompts), pairs of language model outputs (completions), and human preference between the pairs. Given this preference dataset, several methods have been proposed to align pre-trained large language models (LLMs). For instance, reinforcement learning from human feedback (\texttt{RLHF}, \citealt{ziegler2019finetuning}) involves training a reward model from the preference dataset and then fine-tuning the pre-trained LLM using reinforcement learning, typically with the \texttt{PPO} algorithm \citep{schulman2017proximal}, to maximize the reward model. Another approach is the direct preference optimization (\texttt{DPO}) procedure \citep{rafailov2024direct}, where the pre-trained LLM is directly fine-tuned with the preference dataset by learning an implicit reward model.

Learning from human feedback has proven to have extraordinary abilities in its application to various fields, including robotics \citep{tucker2020preference, biyik2020active, biyik2024active}, language models \citep{ouyang2022training, touvron2023llama, bubeck2023sparks}, and recommendations \citep{chen2024softmax, zhao2023recommender}. However, the majority of works in this area focus on preference optimization, and little is known about how to efficiently construct a human preferences dataset. In this work, we provide a theoretically grounded insight into the data collection process for learning a reward model.

In general, a very large dataset of prompts and associated generations $\dini$ is sampled. Among this dataset, a smaller one is selected ($\dselect$) and receives feedback from human labelers, due to the cost of labeling the generations. In practice, the dataset selection is achieved without too much care, hence a loss in the information that could have been retrieved from the original dataset $\dini$. Some techniques to improve the selection rely on heuristics or black-box methods \citep{shen2024towards, dong2024rlhf, chang2024dataset} but lack of provable bounds on the optimality of the procedure.

Based on these considerations, we study the offline selection of the optimal dataset $\dselect$. Our goal is to minimize the number of samples that need to be rated by labelers while retaining as much valuable information as possible from the initial dataset $\dini$.
To achieve this, we propose a new method called $\algodpo$: \texttt{Optimal Design for Policy Optimization}. This method guides the dataset selection process using the solution to an optimal design problem. We prove that $\algodpo$ is optimal from a worst-case perspective. Interestingly, $\algodpo$ can be applied to select pairs in the dataset $\dselect$ at a low cost before running any preference optimization procedure.

We summarize our contributions as follows:
\begin{itemize}
    \item Under the Bradley-Terry model and the assumption of a contextual linear bandit for the reward model, we formalize a pure exploration bandit framework for the collection of samples used to train the reward model in RLHF.
    \item Within that setting, we introduce $\algodpo$: \texttt{Optimal Design for Preference Optimization}, which optimally selects the best arms to learn the reward model and we upper bound its simple regret.
    \item We prove the optimality of our technique with a lower bound which matches our upper bound up to logarithmic factors.
\end{itemize}

Note that few works study the optimal way of choosing the dataset of human preferences. Thus, we are very enthusiastic about the potential impact of our method and theoretical results on the reward training steps. As mentioned by \citet{casper2023open}, collecting data that is representative of human preferences is an open problem in RLHF and deserves more attention, hence our attempt in this direction.

\section{Setting}

\subsection{Background on RLHF}

For what follows, $\cX$ represents the set of contexts (or prompts) and $\cY$ the set of generations (or completions). Human labelers are presented with pairs of prompt-completion tuples, denoted $(x, y)$ and $(x, y')$, which we can express as $\{x, y, y'\}$. Formally, a language model $\phi$ is a mapping from the set of contexts $\cX$ to probability distributions over the set of possible generations $\cY$. The task of the labelers (annotators) is to determine which completion between $y$ and $y'$ is more accurate or preferable in the context of $x$, denoted as ${y \succ y' | x}$, when $(x, y)$ is preferred over $(x, y')$. To account for human uncertainty, we model the binary feedback $\1({y \succ y' | x})$ process probabilistically by assuming a preference probability $\P$: the event $\{y \succ y' | x\}$ coded as a binary variable $\1({y \succ y' | x})$ occurs with probability $\P({y \succ y' | x})$. 

The Bradley-Terry model \citep{bradley1952rank} provides a framework for modeling preferences based on real-valued rewards. Given a reward function $r(x, y)$ that assigns a score to each context-generation pair $(x, y)$, the probability of favoring one generation over another is expressed as follows
\begin{align}\label{equation:bradley_terry}
    \P(y \succ y'|x) & = \sigma(r(x,y)-r(x,y')) \\
    & = \nonumber 1/(1+e^{-(r(x,y)-r(x,y'))}) \eqsp,
\end{align}
where $\sigma$ is the sigmoid function. It is worth noting that alternative preference models, such as the Plackett-Luce model \citep{plackett1975analysis}, can be used instead of this one. After having selected a dataset of pairs $\dselect = \{X_t, Y_t^{(1)}, Y_t^{(2)}\}_{t \in [T]}$ and the associated human preferences to make it $\dselected = \{X_t, Y_t^{(1)}, Y_t^{(2)}, \1(Y_t^{(1)}\succ Y_t^{(2)}|X_t)\}_{t \in [T]}$, the estimated reward function $\hat{r}$ is computed as the minimum of the loss
\begin{equation}\label{equation:loss_rlhf}
\cJ(r) = -\E_{(x,y,y')\sim \dselect}[\log(\P(y\succ y'|x))] \eqsp,
\end{equation}
and is then used to fine-tune the model $\phi$ which needs to maximize this reward while staying close from the initial model $\phi_0$, which is achieved by minimizing the following loss
\begin{equation}\label{equation:loss_policy_rlhf}
    \cL(\phi) = \E_{\phi}[\hat{r}(x,y)] - \gamma \kldiv(\phi||\phi_0) \eqsp,
\end{equation}
where $\gamma$ is some constant and $\kldiv$ stands for the Kullback-Leibler divergence.

Despite a growing literature around the optimization procedures \eqref{equation:loss_rlhf}, \eqref{equation:loss_policy_rlhf} \citep[see, e.g.,][]{schulman2017proximal, rafailov2024direct, azar2024general}, little has been done to select optimally the human-labelled dataset $\dselect$, although it crucially impacts the reward training or the policy optimization.

In practice, the selection of the pairs $(y_n^1, \ldots, y_n^K)$ associated with the $n$-th prompt $x_n$ is achieved with the initial model $\phi_0$ sampling several generations for the same context (with a change of the temperature between the different ones) from which two generations are randomly selected. The full dataset is then given to labelers for rating. $T$ and $N$ are of order $1000$ to $100 000$ for usual datasets while $K$ is around one or a few dozens.

\subsection{RLHF as a Dueling Bandit Problem}

We now introduce our offline setting, which is one of the main novelty of our approach as compared to previous works around this topic. Relying on optimal design and statistical foundations, the objective is to choose a dataset $\dselect$ of prompts-generations that maximizes the information gained from human labelers' feedback. Our approach to minimize the size of the collected dataset is notable for two reasons. First, it aligns with common practice, as it is impractical to operate online and get labelers' feedback before choosing the next pair. Second, we establish matching upper and lower bounds, up to constant and logarithmic factors, which ensures the efficiency of $\algodpo$.

We make the assumption of a contextual linear reward, hence the existence of a known feature map $\psi \colon \cX \times \cY \to \R^d, x, y \mapsto \psi(x,y)$ such that for any $x,y \in \cX \times \cY$
\begin{equation}\label{equation:definition_linear_reward}
    r(x,y) = \langle \theta^\star, \psi(x,y) \rangle \eqsp.
\end{equation}
The reward is given with respect to the embedding of the promt-completion pair. Simple encoder models such as \texttt{BERT}, \texttt{RoBERTa} or \texttt{SBERT} \citep{reimers2019sentence,devlin2018bert,liu2019roberta} can be used for the embedding step. Usually, the feature map can be obtained by removing the last layer of the initially trained model.
\\

\begin{figure}[!htb]\centering
\includegraphics[width=0.47\textwidth,trim=1cm 0cm 1cm 1.5cm]{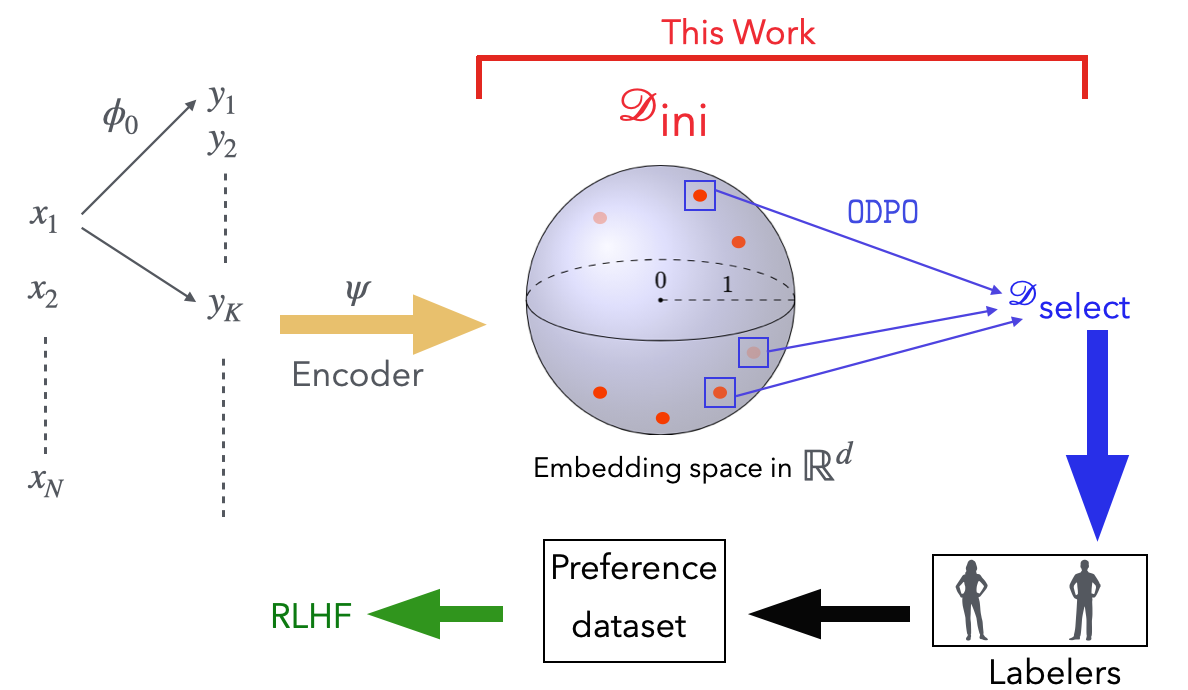}
    \caption{Illustration of ODPO among the whole RLHF framework.}
\label{figure:regret_comparison_non_contextual}
\end{figure}

We now work in the embedding space $\R^d$ for the dataset selection. Defining $N$ as the number of available prompts, and $K$ the number of associated generations for each prompt (although our results still hold for $N, K$ tending to infinity), we have access to an initial dataset that can be written $\{(x_n, y_n^k)\}_{n \in [N], k \in [K]}$. We model it as a union of $N$ sets $\cA_n, n \in [N]$ and write $\dini = \cup_{n \in [N]}\cA_n$. A subset $\cA_n$ represents the set of all the prompts associated with the same generation $x_n$: for any $n \in [N], \cA_n = \{a_n^1, \ldots, a_n^K\}$ and $a_n^k = \psi(x_n, y_n^k)$ with $n \in [N], k \in [K]$. The goal of our procedure is to select a subset of $T \in \N^\star$ pairs of generations, each pair being associated with the same context. Formally, it boils down to choosing a sequence $\{(\al_t, \am_t)\}_{t \in [T]}$ of pairs in $\dini$ that receive a feedback from labelers such that for any $t \in [T], (\al_t, \am_t) = (a_n^i, a_n^j)$ for some $n \in [N], i, j \in [K]$. The selected dataset $\dselect = \{(\al_t, \am_t)\}_{t \in [T]}$ is given to labelers who give the winner of the duel between $\al_t$ and $\am_t$ for any $t \in [T]$. The preference feedback is then received as
\begin{align*}
    & \{\al_t \succ \am_t\} \; \text{ with probability } \sigma(\langle \theta^\star, \al_t - \am_t \rangle) \; \\
    \text{ or } & \{\am_t \succ \al_t\} \; \text{ with probability } \sigma(\langle \theta^\star, \am_t - \al_t \rangle) \eqsp,
\end{align*}
and we encode it as a random variable $Y_t$, following
\begin{equation}\label{equation:definition_Y}
    Y_t = \1(\al_t \succ \am_t) \eqsp,
\end{equation}
which therefore follows the distribution
\begin{align}\label{equation:probability_Y}
    \P_{\theta^\star}(Y_t=1|\cF_{t-1}) & = \sigma(\langle \theta^\star, \al_t-\am_t\rangle) \\
    & \nonumber = 1/(1+e^{-\langle \theta^\star, \al_t-\am_t\rangle}) \eqsp,
\end{align}
for the unknown reward parameter $\theta^\star$. We also define the set of differences between all possible action pairs $\cB = \{a_n^i - a_n^j\}_{n \in [N], i, j \in [K]} = \cup_{n \in [N]} \{\cA_n - \cA_n\}$, as well as its cardinal $L = \card(\cB) \leq T K^2$. Choosing an arbitrary ordering among the elements of $\cB$, we can write $\cB = \{b_l\}_{l \in [L]}$.

It is important to note that we considered a finite action space for the sake of clarity. However, \textit{our theory still holds for an arbitrary action space of possibly infinite size}. We would keep the same bounds depending solely on $T$ and $d$ due to the leverage of optimal design theory which circumvents the burden of having a large action space and relies on a distribution $\pi^\star$ with a finite and bounded support.

Based on the feedback $(Y_t)_{t \in [T]}$ from the preference pairs, our procedure first estimates $\theta^\star$. Then, for any context $x_n$ given as an argument, a completion $y_n^i, i \in [K]$ or equivalently an action $\hat{a}_{T}(\cA_n) \in \cA_n$ can be chosen. We now define the \textit{simple regret} of an algorithm $\alg$, as
\begin{equation}\label{equation:definition_regret}
    \regret_{\alg}(T, (\cA_n)_{n \in [N]}, \theta^\star) = \max_{n \in [N]}\max_{a \in \cA_n} \ps{\theta^\star}{a- \hat{a}_T(\cA_n)} \eqsp,
\end{equation}
and defining $a_n^\star = \argmax_{a \in \cA_n}\langle \theta^\star, a\rangle$, we have that
\begin{equation*}
    \regret_{\alg}(T,(\cA_n)_{n \in [N]}, \theta^\star) = \max_{n \in [N]} \ps{\theta^\star}{a_n^\star - \hat{a}_T(\cA_n)} \eqsp,
\end{equation*}
Note that in our setup, we are looking for an algorithm that converges for \textit{any} possible parameter $\theta^\star$ in the unit ball. We also consider \textit{any} set of actions $(\cA_n)_{n \in [N]}$ - which makes our results robust in an adversarial, non i.i.d. setting. \textit{Minimizing the simple regret} is a coherent objective to study optimal dataset selection. Since we only care about selecting \textit{informative} arms and about the quality of the prediction after all the arms have been sampled, it makes more sense than looking at the cumulative reward. Imagine that labelers need to rate a pair of \textit{bad} completions: there is no harm for anyone. Secondly, it is hard to make hypotheses on the form of the actions sets or the reward parameters for embeddings of LM generations, hence the fact that we do not make any i.i.d. assumption.

\textit{Objectives.} Note that we could have thought about different objectives for our problem, instead of minimizing the simple regret:
\begin{itemize}
    \item \textit{Best-arm identification}: one formulation of it within our setup would be to maximize $\P(\hat{a}_T(\cA_n)=a^\star_n)$ over any $n \in [N]$.
    \item \textit{Arm distance minimization}: for any $T \in \N, n \in [N]$, minimize $\|\hat{a}_T(\cA_n)-a^\star_n\|$ for some well-chosen norm $\| \cdot \|$. 
\end{itemize}
We do not focus on \textit{arm distance minimization} here, as it is difficult to quantify the difference in quality between two LM generations based on their distance or cosine similarity in $\R^d$ \citep{steck2024cosine}. Additionally, since the reward gap between two arms can be arbitrarily small and approach zero, the concept of \textit{best-arm} does not really apply to our setup. Instead, seeking simple regret minimization effectively captures the quality of the dataset selection process by measuring how well the sampled pairs align the reward model with human preferences - with the one-step final reward becoming close from optimality.

We make the following assumption about the reward parameter as well as the embeddings of the pairs in $\R^d$.

\begin{assumption}[Boundedness of action and parameter]\label{assumption:boundedness}
    For any $x,y \in \cX \times \cY, \psi(x,y) \in \oball(0,1)$, where $\oball(0,1)$ stands for the unit ball in $\R^d$. We also suppose that $\theta^\star \in \oball(0,1)$.
\end{assumption}

\subsection{Parameter Estimation}

\textit{Log-likelihood and design matrix.} Several useful quantities appear in the rest of the paper. Since they are at the core of our algorithms and results, we introduce them now. We define the regularized log-likelihood $\cL$ for the collected samples up to time $t$ and a reward parameter $\theta \in \R^d$ as
\begin{align}\nonumber
    \cL_t(\{\al_s, \am_s, Y_s\}_{s\in [t-1]}, \theta) & = \sum_{s=1}^{t-1} \log( \P_\theta(\al_s, \am_s, Y_s)) \\
    & \label{equation:definition_likelihod} \quad - \lambda \|\theta\|_2^2 /2 \eqsp,
\end{align}
which can be rewritten as
\begin{align*}
    & \cL_t(\{\al_s, \am_s, Y_s\}_{s \in [t-1]}, \theta) = \sum_{s=1}^{t-1} Y_s \log(\sigma(\langle \theta, \al_s-\am_s\rangle)) \\
    & \quad + \sum_{s=1}^{t-1}(1- Y_s)\log(\sigma(-\langle \theta, \al_s-\am_s\rangle)) - \lambda \|\theta\|_2^2 /2\eqsp,
\end{align*}
and the maximum likelihood estimator at step $t$ (MLE) $\hat{\theta}_t$ is computed following
\begin{equation}\label{equation:definition_mle}
\hat{\theta}_t \in \argmax_{\theta \in \R^d} \cL_t(\{\al_s, \am_s, Y_s\}_{s \in [t-1]}, \theta) \eqsp.
\end{equation}

\begin{restatable}{lemma}{differentiationequationlikelihood}\label{lemma:differentiation_equation_likelihood}
We can differentiate the likelihood defined in \eqref{equation:definition_likelihod}, and obtain
\begin{align}\label{equation:differentiation_likelihood}
    & \nabla_\theta \cL(\{\al_s, \am_s, Y_s\}_{s \in [t-1]}, \theta) \\
    & \quad \nonumber =  \sum_{s=1}^{t-1} (Y_s - \P_{\theta}(Y_s=1))(\al_s - \am_s) - \lambda \theta \eqsp,
\end{align}
which gives by definition of the maximum likelihood estimator in \eqref{equation:definition_mle}, that $\hat{\theta}_t$ must satisfy
\begin{equation}\label{equation:solution_hat_theta}
    \sum_{s=1}^{t-1} (Y_s - \P_{\hat{\theta}_t}(Y_s=1))(\al_s - \am_s) - \lambda \hat{\theta}_t = 0 \eqsp.
\end{equation}
\end{restatable}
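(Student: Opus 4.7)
The plan is a direct term-by-term computation of the gradient, followed by a first-order optimality argument. Starting from the expanded form of $\cL_t$ given just above the lemma statement, the regularizer contributes $-\lambda \theta$ to the gradient immediately, so the substance is differentiating the two sigmoid-log sums.

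For each $s \in [t-1]$, I would apply the chain rule together with the standard identities $\sigma'(u) = \sigma(u)(1-\sigma(u))$ and $1 - \sigma(u) = \sigma(-u)$. This yields $\nabla_\theta \log \sigma(\langle \theta, \al_s - \am_s\rangle) = \sigma(-\langle \theta, \al_s - \am_s\rangle)(\al_s - \am_s)$ and, analogously, $\nabla_\theta \log \sigma(-\langle \theta, \al_s - \am_s\rangle) = -\sigma(\langle \theta, \al_s - \am_s\rangle)(\al_s - \am_s)$. Combining the two contributions weighted by $Y_s$ and $1-Y_s$, the coefficient of $(\al_s - \am_s)$ collapses, via $Y_s \sigma(-u) - (1-Y_s)\sigma(u) = Y_s - \sigma(u)$ with $u = \langle \theta, \al_s - \am_s\rangle$, to $Y_s - \sigma(\langle \theta, \al_s - \am_s\rangle)$. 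Identifying $\sigma(\langle \theta, \al_s - \am_s\rangle)$ with $\P_\theta(Y_s = 1)$ via \eqref{equation:probability_Y} then gives exactly \eqref{equation:differentiation_likelihood}.

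For the second statement, I would invoke strict concavity of $\theta \mapsto \cL_t(\cdot,\theta)$: each summand is a concave log-sigmoid of a linear form in $\theta$, and the ridge term $-\lambda \|\theta\|_2^2/2$ is strictly concave whenever $\lambda > 0$, which also guarantees coercivity and hence existence of a (unique) maximizer. Because the optimization in \eqref{equation:definition_mle} is unconstrained on $\R^d$, the first-order condition $\nabla_\theta \cL_t(\cdot, \hat{\theta}_t) = 0$ holds, which upon substitution of \eqref{equation:differentiation_likelihood} is precisely \eqref{equation:solution_hat_theta}.

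There is no real obstacle here; the lemma is essentially bookkeeping. The only point where care is needed is the sign consolidation $Y_s \sigma(-u) - (1-Y_s)\sigma(u) = Y_s - \sigma(u)$, and a brief remark that strict concavity of $\cL_t$ makes the first-order condition both necessary and sufficient for $\hat{\theta}_t$ to be the MLE.
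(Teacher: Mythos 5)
Your proposal is correct and follows essentially the same route as the paper's proof: a direct gradient computation in which the coefficient of $(\al_s - \am_s)$ collapses to $Y_s - \sigma(\langle\theta, \al_s-\am_s\rangle) = Y_s - \P_\theta(Y_s=1)$, followed by the first-order condition at the unconstrained maximizer. Your explicit appeal to strict concavity and coercivity of the regularized likelihood to justify existence of the maximizer and the validity of the stationarity condition is a small addition the paper leaves implicit, but the argument is otherwise identical.
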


Defining for any $t \in [T+1]$ the function $H_t \colon \R^d \to \R^d$, $\theta \mapsto \lambda \theta + \sum_{s=1}^{t-1} \sigma(\ps{\theta}{\al_s-\am_s})(\al_s-\am_s)$, we obtain by definition of $\thmle$ that
\begin{equation}\label{equation:H_for_hattheta}
    \sum_{s=1}^{t-1} Y_s(\al_s-\am_s) = H_t(\thmle) \eqsp.
\end{equation}
Note that we define the MLE as the maximizer of the likelihood over the whole set $\R^d$ although we know that under \Cref{assumption:boundedness}, $\theta^\star \in \oball(0,1)$. This is why we define the projected MLE estimator $\pmle$ - a similar quantity is also used  by \citet{faury2020improved} - as
\begin{equation}\label{equation:definition_projected_mle}
    \pmle = \argmin_{\theta \in \oball(0,1)} \|H_t(\theta) - H_{t}(\hat{\theta}_{t})\|_{V_{t}^{-1}} \eqsp,
\end{equation}
where $\desmat_t$ stands for the design matrix in our problem, defined as
\begin{equation}\label{equation:design_matrix}
    \desmat_t = \lambda I + \sum_{s=1}^{t-1} (\al_s - \am_s)(\al_s - \am_s)^T \eqsp,
\end{equation}
where $\lambda >0$ is a regularization parameter - the same as for the likelihood. Since we work in an offline fashion, we have access to all the data $\{a_n^k\}_{n \in [N], k \in [K]}$ and work on it, trying to extract as much knowledge as possible from the pairwise comparisons.

Note that a lot of applied works help to circumvent the burden of manipulating very large set of parameters in language modeling \citep{hu2021lora, houlsby2019parameter, lester2021power}, such as the computation of inversion of matrices.

\section{Offline and Online Algorithms}\label{section:algorithm}

Algorithms for our setting sample $T$ pairs from the set $\dini$ (possibly with repetition). For any $t \in [T]$, we write $(\al_t, \am_t)$ for the pair of actions sampled by $\algodpo$ at iteration $t$, even though there is no "time ordering" in the procedure. This choice corresponds to taking an action $B_t \in \cB \colon B_t = \al_t - \am_t$. \eqref{equation:probability_Y} shows that for a reward parameter $\theta$, we have $Y_t \sim \ber(\sigma(\langle \theta, \al_t-\am_t\rangle)) = \ber(\sigma(\theta^T B_t))$. 

\subsection{$\algodpo$: Optimal Design Policy}

We now introduce $\algodpo$: \texttt{Optimally Designed Policy Optimization}, with the pseudocode provided in \Cref{algorithm:odpo_policy}. The core idea behind is to work offline using the entire dataset $\dini$. The strength of optimal design techniques comes from the \textit{Kiefer-Wolfowitz theorem} \Cref{appendix:algorithms})
to select an optimal core subset of samples. This makes it ideal for selecting $\dselect$ without requiring any online feedback. In our approach, the approximate optimal design policy $\hat{\pi}$ is obtained using the \textit{Frank-Wolfe algorithm}, given in \Cref{appendix:algorithms}
. Instead of requiring $T$ steps of computations of the likelihood and most informative pairs, our setup only requires to run the \textit{Frank-Wolfe algorithm} to know which subset $\dselect$ to select from $\dini$. Then, after the human preferences over $\dselect$ are given, the likelihood and the MLE are only computed once.

After sampling $T$ informative pairs $\{\al_t, \am_t\}_{t \in [T]}$ from $\dini$, $\algodpo$ constructs the maximum likelihood estimator $\hat{\theta}_T$ for the regularized log-likelihood relying on $\dselected = \{\al_t, \am_t, Y_t\}_{t \in [T]}$ and the maximum likelihood estimator $\hat{\theta}_{T+1}^P$ projected on the unit ball - see \eqref{equation:definition_projected_mle}. Then, for any context $x_n, n \in [N]$ associated with the embedded set $\cA_n$, and the estimated reward parameter $\hat{\theta}_{T+1}^P$ output by $\algodpo$, we can estimate the best-arm in this set $\hat{a}_T(\cA_n)$, following an optimistic procedure
\begin{equation}\label{equation:definition_estimated_action}
    \hat{a}_T(\cA_n) = \argmax_{a \in \cA_n} \langle\hat{\theta}^P_{T+1}, a\rangle \eqsp.
\end{equation}

\begin{algorithm}[!ht]
\caption{$\algodpo$: \texttt{Optimally Designed Policy Optimization}}\label{algorithm:odpo_policy}
\begin{algorithmic}[1]
    \State {\bfseries Input:} Number of samples $T$, set of actions $\cB = \cup_{n \in [N]} \{a_n^i-a_n^j\}_{n \in [N], i, j \in [K]}$ of size $L$, regularization parameters $\lambda$, approximation parameter, $\epsilon$.
    \State {\bfseries Compute} the history $\cH_{0} = \varnothing$, as well as $t=0, \hat{\theta}_0 = \varnothing$ and $V_0 = \lambda I$, $\dselect = \varnothing$.
    \State Use $\algfrankwolfe$ to compute an $(1+\epsilon)$ approximation $\hat{\pi}$ of the optimal design $\pi^\star$ with $\cB, \cU(\cB), \lambda$.
    \For{$b \in \cB$}
        \State Append $b$ to $\dselect$ $\lceil T\hat{\pi}_b \rceil$ times.
    \EndFor
    \State Compute $\hat{\theta}_{T+1}$ according to \eqref{equation:definition_mle} and $\hat{\theta}_{T+1}^P$ according to \eqref{equation:definition_projected_mle}.
    \State Return $\dselect$ and $\hat{\theta}_{T+1}^P$.
\end{algorithmic}
\end{algorithm}

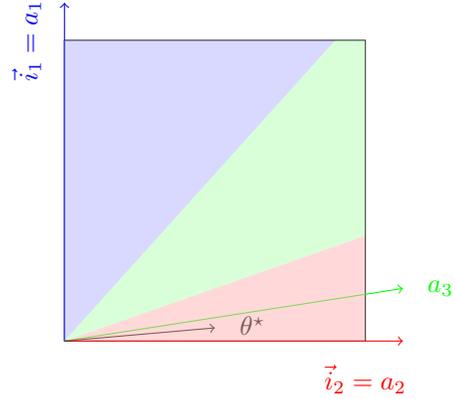
\begin{figure}[ht]
\centering
\begin{tikzpicture}
  \draw (0,0) -- (4,0) -- (4,4) -- (0,4) -- cycle;
  \draw[->,blue] (0,0) -- (0,4.5);
  \draw[->,red] (0,0) -- (4.5,0);
  \draw[->,green] (0,0) -- (4.5,0.7);
  \draw[->] (0,0) -- (2,0.175);
  \node[] at (2.5,0.2) {$\theta^\star$};
  \fill[blue!30,opacity=0.5] (0,0) -- (3.6,4) -- (0,4) -- cycle;
    \fill[green!30,opacity=0.5] (0,0) -- (3.6,4) -- (4,4) -- (4,1.4) -- cycle;
    \fill[red!30,opacity=0.5] (0,0) -- (4,0) -- (4,1.4) -- cycle;`
    \node[rotate=90, text=blue] at (-0.5,4) {$\Vec{i}_1=a_1$};
    \node[text=red] at (4,-0.5) {$\Vec{i}_2=a_2$};
    \node[text=green] at (5,0.7) {$a_3$};
\end{tikzpicture}
\caption{Note that on this figure, $a_2$ and $a_3$ are optimal, alghough playing both of them for the duel will provide feedback of very low value since they lie in the same region of $\R^d$. This is why a duel between $a_1$ and another arm is of greater interest in our exploration setup: sampling \textit{good arms} is not the optimal strategy, hence the link with pure exploration.}
\end{figure}

\begin{restatable}{lemma}{lemmaconcentrationmleestimator}\label{lemma_concentration_mle_estimator} Under \Cref{assumption:boundedness}, for any $\delta \in (0,1)$, with probability at least $1-\delta$, we have that 
\begin{align*}
    & \|\pmle- \theta^\star\|_{V_{t}} \leq \\
    & 20 \left[\sqrt{2 \log (1/\delta) + d\, \log\left(\lambda^{1-1/d}+4t/d\lambda^{1/d} \right)} + \sqrt{\lambda} \right] \eqsp.
\end{align*}
\end{restatable}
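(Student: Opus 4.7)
The plan is to bound $\|\pmle - \theta^\star\|_{V_t}$ by passing through the nonlinear map $H_t$ in three stages, using the projection to tame the weak curvature of $\sigma$. First, since $\theta^\star \in \oball(0,1)$ by \Cref{assumption:boundedness}, it is a feasible point in the minimization defining $\pmle$ at \eqref{equation:definition_projected_mle}. Optimality combined with the triangle inequality therefore yields
\[
\|H_t(\pmle) - H_t(\theta^\star)\|_{V_t^{-1}} \leq 2\,\|H_t(\thmle) - H_t(\theta^\star)\|_{V_t^{-1}},
\]
thus moving the problem from the projected MLE to the unprojected one.

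Next, I would control the right-hand side by invoking \eqref{equation:H_for_hattheta} and the definition of $H_t$, which together give
\[
H_t(\thmle) - H_t(\theta^\star) = \sum_{s=1}^{t-1} \epsilon_s (\al_s-\am_s) - \lambda\theta^\star,
\]
where $\epsilon_s := Y_s - \sigma(\langle \theta^\star, \al_s-\am_s\rangle)$ is a zero-mean, $[-1,1]$-bounded martingale difference sequence, hence $1/2$-subgaussian. A self-normalized concentration inequality for vector-valued martingales (Theorem~1 of Abbasi-Yadkori, P\'al and Szepesv\'ari, 2011) yields with probability at least $1-\delta$ a bound of order $\sqrt{\log(1/\delta) + \log(\det V_t / \lambda^d)}$ on $\|\sum_s \epsilon_s (\al_s-\am_s)\|_{V_t^{-1}}$. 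Combining this with $\|\lambda\theta^\star\|_{V_t^{-1}} \leq \sqrt\lambda$ (using $V_t \succeq \lambda I$ and $\|\theta^\star\|\leq 1$) and the eigenvalue bound $\det V_t \leq (\lambda + 4t/d)^d$, itself obtained from $\operatorname{tr}(V_t) \leq d\lambda + 4t$ (since $\|\al_s-\am_s\|\leq 2$) via AM--GM, reproduces the claimed logarithmic expression.

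The third stage converts the $H_t$-norm bound into a bound on $\|\pmle - \theta^\star\|_{V_t}$. By the fundamental theorem of calculus, $H_t(\pmle) - H_t(\theta^\star) = G(\pmle - \theta^\star)$, with $G = \lambda I + \sum_s \alpha_s (\al_s-\am_s)(\al_s-\am_s)^T$ and $\alpha_s = \int_0^1 \sigma'(\langle (1-u)\theta^\star + u\pmle, \al_s-\am_s\rangle)\,du$. Since $\pmle, \theta^\star \in \oball(0,1)$ and $\|\al_s - \am_s\| \leq 2$, the argument of $\sigma'$ lies in $[-2,2]$ along the entire segment, so $\alpha_s \geq \kappa_0 := \sigma'(2) = e^2/(1+e^2)^2 \approx 0.105$. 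Because $\kappa_0 \leq 1$, this forces $G \succeq \kappa_0 V_t$, and the change of variable $u = V_t^{1/2}(\pmle - \theta^\star)$ in the resulting quadratic form gives $\|H_t(\pmle) - H_t(\theta^\star)\|_{V_t^{-1}} \geq \kappa_0 \|\pmle - \theta^\star\|_{V_t}$. Concatenating the three inequalities yields the lemma, the numerical constant $20$ coming from $2/\kappa_0 \approx 19.05$.

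The main obstacle will be this final inversion step: the multiplicative $1/\kappa_0$ blow-up is the standard price paid for the logistic link's weak curvature, and shrinking the absolute constant would require a sharper self-concordance analysis in the spirit of \citet{faury2020improved}. The other two stages are routine combinations of the closed-form MLE equation \eqref{equation:H_for_hattheta} with standard self-normalized martingale tools.
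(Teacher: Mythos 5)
Your proposal is correct and follows essentially the same route as the paper: the factor of $2$ from feasibility of $\theta^\star$ in the projection step, the self-normalized martingale bound of \citet{abbasi2011improved} on $Z_t=\sum_s(Y_s-\sigma(\langle\theta^\star,\al_s-\am_s\rangle))(\al_s-\am_s)$ together with the trace/AM--GM determinant bound, and the integral mean-value linearization $H_t(\pmle)-H_t(\theta^\star)=G(\pmle-\theta^\star)$ with $\sigma'\geq 0.1$ on $[-2,2]$ supplying the remaining factor of roughly $10$. The only cosmetic differences are that the paper splits the curvature factor as $\sqrt{10}\cdot\sqrt{10}$ through the intermediate $G_t$-norm rather than your direct $G\succeq\kappa_0 V_t$ comparison, and uses the looser $1$-subgaussian constant for $X_t$, which is what the stated $\sqrt{2\log(1/\delta)+\cdots}$ reflects.
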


\Cref{lemma_concentration_mle_estimator} allows to control the gap between the true reward parameter $\theta^\star$ and the estimation $\hat{\theta}_{T+1}$. We postpone the proof to \Cref{appendix:proofs}.

\textit{Decomposition of the Regret.} Since $\regret$ can be written $\regret(T, (\cA_n)_{n \in [N]}, \theta^\star) = \langle \theta^\star, a^\star_{n^\star} - \hat{a}_{T+1}(\cA_{n^\star})\rangle$ with $a^\star_n = \argmax_{a \in \cA_n}\langle \theta^\star, a\rangle$ and $n^\star = \argmax_{n \in [N]} \langle \theta^\star, a^\star_n - \hat{a}_{T+1}(\cA_{n^\star})\rangle$, we have that
\begin{align}
    \regret(T, (\cA_n)_{n \in [N]}, \theta^\star) & \label{equation:decomposition_regret} \leq \langle \theta^\star, a^\star_{n^\star} -\hat{a}_{T}(\cA_{n^\star})\rangle \\
    & \nonumber \quad - \langle \hat{\theta}_{T+1}, a^\star_{n^\star} - \hat{a}_{T}(\cA_{n^\star})\rangle \\
    & \nonumber = \langle \theta^\star-\hat{\theta}_{T+1}, a^\star_{n^\star} -\hat{a}_{T}(\cA_{n^\star}) \rangle \\
    & \nonumber \leq \|\theta^\star - \hat{\theta}_{T+1}\|_{V_{T+1}} \\
    & \nonumber \quad \times \|a^\star_{n^\star} - \hat{a}_T(\cA_{n^\star})\|_{V_{T+1}^{-1}} \\
    & \nonumber \leq \|\theta^\star - \hat{\theta}_{T+1}\|_{V_{T+1}} \max_{b \in \cB} \|b\|_{V_{T+1}^{-1}}
    \eqsp,
\end{align}
where the third line holds thanks to Hölder inequality. Let $\pi \colon \cB \to [0,1]$ be a distribution over the set of actions. We define the application $g \colon \Delta(\cB)\to \R$ and the design matrix $\matopt(\pi)$ for the distribution $\pi$ as
\begin{equation}\label{equation:definition_V_and_g}
    \matopt(\pi) = \sum_{b \in \cB} \pi(b) \, b \, b^T \; \text{and} \; g(\pi) = \max_{b \in \cB} \norm{b}^2_{\matopt(\pi)^{-1}} .
\end{equation}

A design $\pi$ for our problem is a probability distribution over the set of actions $\cB$. An optimal design $\pi^\star$ is a solution in the \textit{Kiefer-Wolfowitz} theorem while the distribution $\hat{\pi}$ over $\cB$ is a $(1+\epsilon)$-approximation of $\pi^\star$ if $g(\hat{\pi}) \leq (1+\epsilon)g(\pi^\star)$.
\newline

\begin{restatable}{theorem}{theoremsimpleregretupperbound}\label{theorem:simple_regret_upper_bound}
    Let $\epsilon >0$ and suppose that we collect at least $T\geq d^2$ samples according to an $(1+\epsilon)$ approximation $\hat{\pi}$ of the optimal design policy $\pi^\star$ for the problem. Then, for any $\cB$ and $\theta^\star \in \oball(0,1)$, with probability at least $1-\delta$, $\delta \in (0,1)$, we have that
\begin{align}\label{equation:regret_bound_theorem}
    &\regret(T, (\cA_n)_{n \in [N]}, \theta^\star) \leq 20 (1+\epsilon)\, \sqrt{d/T} \ \times \\
    & \nonumber \left[\sqrt{2 \log (1/\delta) + d\, \log\left(\lambda^{1-1/d}+4T/d\lambda^{1/d} \right)} + \sqrt{\lambda} \right] \eqsp.
    \end{align}
\end{restatable}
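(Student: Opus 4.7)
The proof naturally falls out of the regret decomposition in \eqref{equation:decomposition_regret}, which already reduces the task to bounding two quantities separately: the estimation error $\|\theta^\star - \hat{\theta}_{T+1}\|_{V_{T+1}}$ and the worst-case design norm $\max_{b \in \cB} \|b\|_{V_{T+1}^{-1}}$. My plan is to handle these two factors independently and then multiply.

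For the first factor, I would invoke \Cref{lemma_concentration_mle_estimator} at $t = T+1$. This directly yields, with probability at least $1-\delta$,
\begin{equation*}
    \|\pmle - \theta^\star\|_{V_{T+1}} \leq 20\left[\sqrt{2\log(1/\delta) + d\log\bigl(\lambda^{1-1/d}+4(T+1)/d\lambda^{1/d}\bigr)} + \sqrt{\lambda}\right],
\end{equation*}
which is essentially the bracketed term in \eqref{equation:regret_bound_theorem} (modulo the harmless $T$ vs.\ $T+1$ issue inside the logarithm).

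For the second factor, I would use the structure of the sampling scheme in \Cref{algorithm:odpo_policy}. Since the algorithm plays each $b \in \cB$ exactly $\lceil T \hat{\pi}(b) \rceil \geq T \hat{\pi}(b)$ times, the design matrix $V_{T+1} = \lambda I + \sum_{s=1}^{T} (\al_s-\am_s)(\al_s-\am_s)^{\top}$ satisfies
\begin{equation*}
    V_{T+1} \succeq T \sum_{b \in \cB} \hat{\pi}(b)\, b\, b^{\top} = T \matopt(\hat{\pi}).
\end{equation*}
Therefore $\|b\|_{V_{T+1}^{-1}}^2 \leq T^{-1} \|b\|_{\matopt(\hat{\pi})^{-1}}^2$, and taking the max over $b \in \cB$,
\begin{equation*}
    \max_{b \in \cB} \|b\|_{V_{T+1}^{-1}}^2 \leq \frac{g(\hat{\pi})}{T} \leq \frac{(1+\epsilon)\, g(\pi^\star)}{T} = \frac{(1+\epsilon)\, d}{T},
\end{equation*}
where the last equality is the Kiefer--Wolfowitz theorem (recalled in \Cref{appendix:algorithms}) giving $g(\pi^\star) = d$. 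Taking square roots yields $\max_b \|b\|_{V_{T+1}^{-1}} \leq \sqrt{(1+\epsilon)d/T} \leq (1+\epsilon)\sqrt{d/T}$, which matches the prefactor in \eqref{equation:regret_bound_theorem}.

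Plugging the two bounds into \eqref{equation:decomposition_regret} gives the claim. The main subtlety, rather than a real obstacle, is to handle the ceiling in the sampling step cleanly: $\sum_b \lceil T \hat{\pi}(b)\rceil$ may slightly exceed $T$ by at most $|\operatorname{supp}(\hat{\pi})| \leq d(d+1)/2$, which is where the assumption $T \geq d^2$ enters---it ensures the extra ceiling mass is a lower-order perturbation and that $V_{T+1} \succeq T \matopt(\hat{\pi})$ is essentially tight. The dropped $\lambda I$ regularization term from $V_{T+1}$ is only helpful for our upper bound, so no additional care is needed there.
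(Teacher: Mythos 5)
Your proposal is correct and follows essentially the same route as the paper's proof: the regret decomposition \eqref{equation:decomposition_regret}, \Cref{lemma_concentration_mle_estimator} for the estimation factor, and the bound $V_{T+1}\succeq T\,\matopt(\hat{\pi})$ (dropping $\lambda I$ and the ceilings) combined with $g(\hat{\pi})\le(1+\epsilon)g(\pi^\star)=(1+\epsilon)d$ from Kiefer--Wolfowitz for the design factor. Your remark on the role of $T\ge d^2$ and the $\sqrt{1+\epsilon}\le 1+\epsilon$ step are, if anything, slightly more explicit than the paper's own treatment.
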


\begin{restatable}{corollary}{simpleregretupperbound}\label{corollary:simple_regret_upper_bound}
    Suppose that we have selected the samples $\dselect$ to label under $\hat{\pi}$, a $3/2$-approximation of the optimal design policy $\pi^\star$. Choosing $\lambda = 1/d$ for the regularization, for any $\delta \in (0,1)$, under the conditions of \Cref{theorem:simple_regret_upper_bound}, with probability at least $1-\delta$, we have that
    \begin{align*}
        & \regret(T, (\cA_n)_{n \in [N]}, \theta^\star) \leq 30\sqrt{d/T} \times \\
        & \quad \left[\sqrt{2 \log(1/\delta) + d\log((1+4T)/d^{1-1/d})}+1/\sqrt{d} \right] \eqsp,
    \end{align*}
    and as a consequence, choosing $\delta = d^{1-1/d}/(4T+1)$, we can bound the expectation of the regret as
    \begin{align*}
        \E[\regret(T, (\cA_n)_{n \in [N]}, \theta^\star)] &\leq 30 \frac{d+2}{\sqrt{T}}\sqrt{\log\left(\frac{4T+1}{d^{1-1/d}}\right)} \\
        & \quad + 31/\sqrt{T}  \eqsp.
    \end{align*}
\end{restatable}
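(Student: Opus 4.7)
The corollary is a direct specialization of \Cref{theorem:simple_regret_upper_bound}, so the work is almost entirely bookkeeping; the one non-routine ingredient is converting a high-probability bound to an expectation bound by handling the failure event. I would proceed in two stages.

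\textbf{Stage 1: the high-probability bound.} I start from the inequality of \Cref{theorem:simple_regret_upper_bound} and plug in $\epsilon = 1/2$ (so $\hat\pi$ is a $3/2$-approximation) together with $\lambda = 1/d$. The prefactor becomes $20(1+1/2) = 30$. Inside the logarithm, I simplify
\[
\lambda^{1-1/d} + \frac{4T}{d\,\lambda^{1/d}} = \frac{1}{d^{1-1/d}} + \frac{4T}{d \cdot d^{-1/d}} = \frac{1+4T}{d^{1-1/d}},
\]
and $\sqrt{\lambda} = 1/\sqrt{d}$. Substituting gives exactly the first claimed inequality with probability at least $1-\delta$. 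This step is purely algebraic.

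\textbf{Stage 2: the expectation bound.} I then choose $\delta = d^{1-1/d}/(4T+1)$, so $\log(1/\delta) = \log((4T+1)/d^{1-1/d})$, and the two logarithmic terms inside the square root combine into
\[
2\log(1/\delta) + d\log\!\bigl((1+4T)/d^{1-1/d}\bigr) = (d+2)\log\!\bigl((4T+1)/d^{1-1/d}\bigr).
\]
Using $\sqrt{d(d+2)} \leq d+2$, the dominant term on the good event is bounded by $30(d+2)/\sqrt{T}\,\sqrt{\log((4T+1)/d^{1-1/d})}$, and the additive $\sqrt{\lambda}$ contribution becomes $30/\sqrt{T}$.

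\textbf{Handling the failure event.} Since $\theta^\star$ and all actions lie in $\oball(0,1)$, the simple regret is deterministically bounded by $2$. Writing $B$ for the high-probability bound, I use
\[
\E[\regret] \leq B\cdot (1-\delta) + 2\delta \leq B + 2\delta.
\]
Under the standing assumption $T \geq d^2$ of \Cref{theorem:simple_regret_upper_bound}, one has $d \leq \sqrt{T}$, hence
\[
2\delta = \frac{2\,d^{1-1/d}}{4T+1} \leq \frac{2d}{4T} = \frac{d}{2T} \leq \frac{1}{2\sqrt{T}},
\]
and combining with the $30/\sqrt{T}$ term produces the stated $31/\sqrt{T}$ additive residual.

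\textbf{Main obstacle.} There is no real technical obstacle; the only subtlety is remembering to absorb the failure probability through the trivial $\regret \leq 2$ bound, and verifying that the condition $T \geq d^2$ is exactly what is needed to compress $2\delta$ into the $1/\sqrt{T}$ budget that produces the constant $31$ rather than a $T$- or $d$-dependent residual.
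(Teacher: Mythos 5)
Your proposal is correct and follows essentially the same route as the paper's proof: specialize the theorem with $\epsilon=1/2$ and $\lambda=1/d$, merge the two logarithms into $(d+2)\log((4T+1)/d^{1-1/d})$ via $\sqrt{d(d+2)}\leq d+2$, and absorb the failure event using the deterministic bound $\regret\leq 2$ together with $T\geq d^2$ to fold $2\delta$ into the $1/\sqrt{T}$ residual. No gaps.
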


The lower bound in \Cref{section:lower_bound} matches the upper bound in \Cref{corollary:simple_regret_upper_bound} up to constant or logarithmic factors. Our procedure achieves optimal selection without online feedback, simplifying dataset design and reducing computational costs, as key components like the design matrix or likelihood are computed only once.

\subsection{Changing Action Set}

\textit{Setup.} The term \textit{online} as opposed to \textit{offline} is not always clear in the RLHF literature. We can consider a version of our process with \textit{changing action sets}: at each step $n \in [N]$, where $N \in \N$, a prompt $x_n$ is drawn from the set of prompts $\cX$, and the initial language model $\phi_0$ generates $K$ completions $y_{n}^1, \ldots, y_{n}^K$ associated with $x_n$. At each iteration, from this set of $K$ completions, two samples, $Y_n^{(1)}$ and $Y_n^{(2)}$, must be selected for evaluation by a human labeler. Given that rating samples is costly, the goal is to design an algorithm that optimally selects $T$ pairs, $(x_t, Y_t^{(1)})$ and $(x_t, Y_t^{(2)})$ at each step, to form the most effective dataset before training the reward model while keeping $T$ relatively small. Here, note that $T = N$ based on the previous notation from the offline setting.

\textit{Objectives and metrics.} As before, we are interested in the precision of the reward estimation after the $T$ steps, which guides us towards selecting an optimal dataset $\dselect$. This is why we keep the same objective as before and thus consider the \textit{simple regret} of the procedure, defined here as
\begin{equation}\label{equation:definition_online_regret}
    \regret(T, (\cA_n)_{n \in [N]}, \theta) = \max_{n \in [N]} \max_{a \in \cA_n} \langle \theta, a - \hat{a}(\cA_n)\rangle \eqsp,
\end{equation}
where $\hat{a}(\cA_n) \in \cA_n$ is the best-arm prediction of the procedure among the set $\cA_n$. Considering the same kind of objective also allows us to compare both kinds of procedures.

Formally, at each step $t \in [T]$, an action set $\cA_t, t \in [T]$ is provided and the algorithm must select a pair of samples $(\al_t, \am_t) \in \cA_t^2$. We can define the learner's history as $\cH_t = \sigma(\{X_s, \al_s, \am_s\}_{s=1, \ldots, t})$, $\cH_0 = \varnothing$ and the learner uses an algorithm $\alg$ to choose the action pair $(\al_t, \am_t)$ based on $(\cH_{t-1}, \cA_t, U_t)$, $(U_t)_{t \in \N^\star}$ being a family of independent uniform random variables on $[0,1]$ allowing randomization in $\alg$. A commonly proposed strategy to explore is to pick the pair $(\al_t, \am_t) \in \cA_t^2$ at each iteration following
\begin{equation*}
\al_t, \am_t \in \argmax_{a,a' \in \cA_t} \|a-a'\|_{\desmat_t^{-1}} \eqsp.
\end{equation*}

However, such a strategy, as well as any procedure $\alg$ for a setup with changing arms cannot converge, since we do not make i.i.d. assumptions throughout this work and allow adversarial action sets. The choice of specific action sets that prevent convergence for any algorithm is explained in the proof of \Cref{lemma:lower_bound_online}.

\begin{restatable}{theorem}{lowerboundonline}\label{lemma:lower_bound_online}
Consider the $2$-dimensional euclidian space Span$(e_1, e_2)$ as the whole action space. In that case, there exists a set of actions $(\cA_t)_{t \in [T]}$ and some $\theta^\star \in \oball(0,1)$ such the regret defined in \eqref{equation:definition_online_regret} for any algorithm $\alg$ satisfies
\begin{equation*}
    \regret_\alg(T, (\cA_t)_{t \in [T]}, \theta^\star) \geq \rme^{-c}/2 \eqsp,
\end{equation*}
for some $c >0$ independent of $T$ and $d$.
\end{restatable}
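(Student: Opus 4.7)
The plan is a Le Cam two-point argument that exploits the adversary's freedom to commit to a deterministic sequence of action sets in advance. I would construct two candidate parameters $\theta^\star_+, \theta^\star_- \in \oball(0,1)$ and a fixed action-set sequence $(\cA_t)_{t \in [T]}$ under which the entire observation process has the same law under $\P_{\theta^\star_+}$ and $\P_{\theta^\star_-}$, while the unique optimal arms of the two instances on the last round are antipodal. No algorithm can then favour the correct side under both instances, so the worst-case (over $\pm$) expected simple regret is bounded below by a $T$-independent constant.

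Concretely, fix a small $c \in (0, 1/2)$ and choose $\theta^\star_\pm = \pm(1-c)\,e_2 \in \oball(0,1)$. Set $\cA_t = \{e_1, -e_1\}$ for $t = 1,\dots,T-1$, and $\cA_T = \{e_2, -e_2\}$. For every $t \leq T-1$ and every pair $(\al_t, \am_t) \in \cA_t^2$, the difference $\al_t - \am_t \in \{0, \pm 2 e_1\}$ lies on the $e_1$-axis, so $\langle \theta^\star_\pm, \al_t - \am_t\rangle = 0$ and by \eqref{equation:probability_Y} we have $Y_t \sim \ber(1/2)$ under \emph{both} instances. A short induction on $t$ (the learner's choice of $(\al_{t+1}, \am_{t+1})$ is a measurable function of the history and independent randomness, and the conditional law of $Y_{t+1}$ is $\ber(1/2)$ regardless of the sign of $\theta^\star$) shows that the full history $(\al_s, \am_s, Y_s)_{s \leq T-1}$, and hence the learner's prediction $\hat{a}(\cA_T) \in \{e_2,-e_2\}$, has identical distribution under $\theta^\star_+$ and $\theta^\star_-$. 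Denote $p := \P_{\theta^\star_+}[\hat{a}(\cA_T) = e_2] = \P_{\theta^\star_-}[\hat{a}(\cA_T) = e_2]$.

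On the sets $\cA_t$ with $t < T$ both arms have score $0$ under $\theta^\star_\pm$, so they contribute $0$ to the maximum in \eqref{equation:definition_online_regret}; the regret is entirely controlled by round $T$. Since $\langle \theta^\star_+, e_2 - (-e_2)\rangle = 2(1-c)$ and symmetrically for $\theta^\star_-$, we obtain
\begin{equation*}
    \E_{\theta^\star_+}[\regret_\alg] = 2(1-c)(1-p), \qquad \E_{\theta^\star_-}[\regret_\alg] = 2(1-c)\,p,
\end{equation*}
so $\max_{\pm} \E_{\theta^\star_\pm}[\regret_\alg] \geq (1-c)$. Choosing $c$ such that $(1-c) \geq \rme^{-c}/2$ (any $c \in (0, 1/2)$ works, e.g.\ $c = 1/2$ gives $1/2 \geq \rme^{-1/2}/2$) yields the stated bound with a constant independent of $T$ and of $d = 2$; picking the adverse instance among $\theta^\star_\pm$ then concludes.

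The main delicate point is the induction establishing that the two measures $\P_{\theta^\star_+}$ and $\P_{\theta^\star_-}$ coincide on the observed process. I would handle this formally by coupling: conditional on $\cH_{t-1}$ (equal in distribution under both measures by the induction hypothesis), the pair $(\al_t, \am_t)$ is a deterministic function of $\cH_{t-1}$ and the external uniform seed $U_t$, hence has the same conditional law under $\P_{\theta^\star_\pm}$; and then $Y_t \mid \cH_{t-1}, \al_t, \am_t \sim \ber(\sigma(\langle \theta^\star_\pm, \al_t - \am_t\rangle)) = \ber(1/2)$ thanks to the geometric choice $\theta^\star_\pm \perp e_1$. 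The remaining argument is a direct computation of the two expected regrets and the optimisation over $c$, both elementary.
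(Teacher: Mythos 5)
Your construction (action sets $\{\pm e_1\}$ for $t<T$, $\{\pm e_2\}$ at round $T$, $\theta^\star$ along $\pm e_2$) coincides with the paper's, but the indistinguishability claim at the heart of your argument is false, and this breaks the quantitative conclusion. The prediction $\hat a(\cA_T)$ is made after \emph{all} $T$ feedbacks are collected, including $Y_T$. At round $T$ the learner may query the pair $(\al_T,\am_T)=(e_2,-e_2)$, for which $\al_T-\am_T=2e_2$ is \emph{not} orthogonal to $\theta^\star_\pm$; then $Y_T\sim\ber(\sigma(\pm 2(1-c)))$ has different laws under the two instances. Hence the law of $\hat a(\cA_T)$ differs under $\P_{\theta^\star_+}$ and $\P_{\theta^\star_-}$, and your identity $\P_{\theta^\star_+}[\hat a(\cA_T)=e_2]=\P_{\theta^\star_-}[\hat a(\cA_T)=e_2]$ fails. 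Concretely, with $c=1/2$ the algorithm that queries $(e_2,-e_2)$ at round $T$ and outputs $e_2$ iff $Y_T=1$ has expected regret $2(1-c)\,(1-\sigma(2(1-c)))=1-\sigma(1)\approx 0.27$ under both instances, contradicting your claimed bound $\max_{\pm}\E[\regret]\geq 1-c=1/2$ (and even your final target $\rme^{-1/2}/2\approx 0.30$). Your induction only covers rounds $1,\dots,T-1$; the leap ``and hence the learner's prediction'' is the gap.

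The missing ingredient is exactly what the paper's proof supplies: since only the last round can be informative, the divergence decomposition gives $\kldiv(\P_{\theta},\P_{\theta'})=\E_{\theta}[\kldiv(\ber(\sigma(\theta^T B_T)),\ber(\sigma(\theta^{\prime T} B_T)))]\leq c$ for a constant $c$ independent of $T$, and the Bretagnolle--Huber inequality then yields $\P_{\theta}(\hat a=-e_2)+\P_{\theta'}(\hat a=e_2)\geq \rme^{-c}/2$, so one error probability is at least $\rme^{-c}/4$ and the corresponding expected regret is at least $2\cdot\rme^{-c}/4=\rme^{-c}/2$. This is where the $\rme^{-c}$ in the statement comes from; in your write-up $c$ plays an unrelated role (a shrinkage of $\|\theta^\star\|$) and the information carried by $Y_T$ is never controlled. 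Your reduction to the last round and the observation that rounds $1,\dots,T-1$ carry no information are correct first steps; to repair the proof, replace ``identical laws, hence a common $p$'' by a KL (Bretagnolle--Huber) or total-variation bound quantifying how much a single informative Bernoulli observation can separate the two instances.
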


\subsection{Extensions of the Offline Scenario}

An online version of our setup could involve gathering the entire action set $\dini$ before the learner iteratively selects samples from $\dini$, using feedback at each step to inform the next selection.

However, it may be unrealistic to assume online feedback at each step. A more feasible approach would involve sampling a \textit{batch of pairs} offline, sending them to labelers for evaluation, and then selecting the next batch based on the feedback from the entire batch of preferences. This setup is more practical, as it allows a dynamic process where labelers provide preferences for a batch of pairs before the next batch is sampled, although they do not have to provide feedback at each step (too complex in practice). The efficiency of the method depends on the batch size - a batch of size $1$ mirrors the \online setting and a batch of size $T$, which recovers our offline setup.

A lot of approaches in practice use pairs sampled offline, followed by training \texttt{DPO} \citep[][]{rafailov2024direct} on the resulting dataset. Our offline strategy, allowing the selection of pairs that are statistically the most informative, could significantly enhance \texttt{DPO}'s performance, without requiring too many additional computations. It would be interesting to investigate whether the optimal dataset selection depends on the reward modeling or the choice of the $\psi$-function within the particular setup of \citet{azar2024general}.

Another potential sampling strategy of the dataset could be to select points based on a binary search procedure in $\R^d$ \citep{lobel2018multidimensional}, combined with bandit feedback. The reward parameter can be efficiently identified through cuts of the unit ball in $\R^d$, which is why this approach is of some interest. Finally, we believe that our work has strong connections with best-arm identification in linear bandits \citep{soare2014best, degenne2020gamification}, particularly where ideas from \textit{Optimal Design} are applied. The challenge arises from the absence of online feedback in our setup, and one should look for relaxations in order to circumvent this issue.
\section{Lower Bound}\label{section:lower_bound}

As before, after $N \in \N$ sets of prompt-completions have been sampled, we consider again $\cB = \{a_n^i - a_n^j\}_{n \in [N], i, j \in [K]} = \cup_{n \in [N]} \{\cA_n - \cA_n\}$, the set of all possible action pairs. We consider an algorithm $\alg$ that samples $T$ pairs from this set and receives feedback $Y_t$ from a labeler. For any $t \in [T]$, we write $(\al_t, \am_t)$ for the pair of actions sampled by $\alg$ at iteration $t$, even though there is no "time ordering" in our procedure. This choice corresponds to taking an action
\begin{equation*}
B_t \in \cB \colon B_t = \al_t - \am_t \eqsp.
\end{equation*}
Formally, $\alg$ selects an action $B_t$ at step $t$ and observes a logistic feedback $Y_t \sim \ber(\sigma(\theta^T B_t))$ for reward parameter $\theta$. Based on $(Y_t)_{t \in [T]}$, $\alg$ estimates $\theta$ and then for any input $\cA_n, n \in [N]$, plays some action $\hat{a}_T(\cA_n) \in \cA_n$. The performance is still evaluated with the simple regret
\begin{equation*}
    \regret(T, (\cA_n)_{n \in [N]}, \theta) = \max_{n \in \cA_n} \langle \theta, a^\star_n - \hat{a}(\cA_n)\rangle \eqsp.
\end{equation*}
We write $\P_{\theta}$ for the probability distribution of our linear contextual bandit instance with reward parameter $\theta$ over the whole possible action space $\cA$ and $(P_{b_1}^{\theta}, \ldots, P_{b_L}^{\theta})$ the probability distribution associated with the different pairs of arms from $\cB$ with the parameter $\theta$. $\P_{\theta'}$ as well as $(P_{b_1}^{\theta'}, \ldots, P_{b_L}^{\theta'})$ stand for the same objects with parameter $\theta'$. Within this setup, we can use the divergence bound for general spaces from \citet[][15.8]{lattimore2020bandit}, and write
\begin{align}\label{equation_div_expression}
    & \kldiv(\P_{\theta}, \P_{\theta'}) = \sum_{t=1}^T \E_{\theta}[\kldiv(P_{\al_t -\am_t}^{\theta}, P_{\al_t - \am_t}^{\theta'})] \\
    & \quad \nonumber = \sum_{t=1}^T \E_{\theta}[\kldiv(\ber(\sigma(\theta^T B_t)), \ber(\sigma(\theta^{' T}B_t))] \eqsp.
\end{align}

\begin{restatable}{lemma}{inequalitydivergences}\label{lemma:inequality_divergences}
    Assume that $\P$ and $\Q$ are probability measures on a measurable space $\cX, \cA$ such that $\P$ is absolutely continuous with respect to $\Q$. Then
    \begin{equation*}
        \kldiv(\P, \Q) \leq \log(1 + \chidiv(\P, \Q)) \leq \chidiv(\P, \Q) \eqsp.
    \end{equation*}
    If $\P \ll \Q$ does not hold, then the result is trivial.
\end{restatable}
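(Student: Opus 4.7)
The plan is to reduce both inequalities to two elementary facts: Jensen's inequality for the concave logarithm, and the scalar bound $\log(1+x) \leq x$ valid for all $x > -1$. First I would dispose of the trivial case: if $\P$ is not absolutely continuous with respect to $\Q$, then by convention $\kldiv(\P, \Q) = +\infty$ and also $\chidiv(\P, \Q) = +\infty$, in which case the bounds are vacuous; the statement itself already acknowledges this. So from here on I assume $\P \ll \Q$ and let $f = \rmd\P/\rmd\Q$ denote the Radon-Nikodym derivative.

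Next I would prove the second inequality $\log(1 + \chidiv(\P, \Q)) \leq \chidiv(\P, \Q)$. Since $\chidiv(\P, \Q) = \int (f-1)^2 \, \rmd\Q \geq 0$, the argument of the logarithm is at least $1$, and applying the elementary inequality $\log(1+x) \leq x$ with $x = \chidiv(\P, \Q) \geq 0$ immediately yields the claim.

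For the first inequality $\kldiv(\P, \Q) \leq \log(1 + \chidiv(\P, \Q))$, the main step is the identification
\begin{equation*}
    \E_\P[f] = \int f \cdot f \, \rmd\Q = \int f^2 \, \rmd\Q = 1 + \chidiv(\P, \Q),
\end{equation*}
where the last equality uses $\int f^2 \rmd\Q - 1 = \int (f-1)^2 \rmd\Q + 2\int (f-1)\rmd\Q = \int (f-1)^2 \rmd\Q$ because $\int f \, \rmd\Q = 1$. Then I would write $\kldiv(\P, \Q) = \E_\P[\log f]$ and apply Jensen's inequality to the concave function $\log$ to get
\begin{equation*}
    \E_\P[\log f] \leq \log \E_\P[f] = \log(1 + \chidiv(\P, \Q)),
\end{equation*}
which is exactly the desired bound.

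The only subtle point, and the main obstacle worth flagging, is justifying Jensen's inequality when $f$ may vanish on a $\Q$-null set (so that $\log f = -\infty$ on that set): this is harmless because the set is $\P$-null as well by absolute continuity, so the expectation $\E_\P[\log f]$ is defined in $[-\infty, +\infty]$ and Jensen's inequality still applies. Apart from this measure-theoretic check, the proof is two lines plus the algebraic identity relating $\chidiv$ to the second moment of the likelihood ratio.
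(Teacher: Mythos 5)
Your proof is correct and follows essentially the same route as the paper's: Jensen's inequality applied to the concave logarithm to get $\kldiv(\P,\Q) = \E_\P[\log(\rmd\P/\rmd\Q)] \leq \log \E_\P[\rmd\P/\rmd\Q] = \log(1+\chidiv(\P,\Q))$, followed by the elementary bound $\log(1+x) \leq x$. Your explicit expansion of the identity $\int f^2\,\rmd\Q - 1 = \chidiv(\P,\Q)$ and the remark about the $\Q$-null set where $f$ vanishes are slightly more careful than the paper's one-line computation, but the argument is the same.
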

The result of this lemma is of great help to upper bound the divergence of our Bernoulli random variables since the $\chi^2$ divergence is easier to use with Bernoulli distributions. The proof is postponed to appendix A. 

We now present our main theorem from this section, which gives a lower bound for our setup whiwh matches our upper bound from \Cref{corollary:simple_regret_upper_bound} up to constant or logarithmic factors.

\begin{restatable}{theorem}{lowerbound}\label{theorem:lower_bound}
    Suppose that $d \geq 16$ and that $T \geq d^2$. For any algorithm $\alg$ which samples $T$ pairs from $\cB$ and receives a preference feedback before outputing an action $\hat{a}(\cA_n) \in \cA_n$ for an input $\cA_n$, there exists $(\cA_n)_{n \in [N]} \subseteq \oball(0,1)$ as well as $\theta^\star \in \oball(0,1)$ such that
    \begin{equation*}
        \regret(T, (\cA_n)_{n \in [N]}, \theta^\star) \geq d \, \rme^{-5}/4\sqrt{T} \eqsp.
    \end{equation*}
\end{restatable}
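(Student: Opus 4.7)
The plan is to establish the lower bound via an information-theoretic argument: construct a family of parametric hard instances, bound the KL divergences between them using \Cref{lemma:inequality_divergences}, and apply Assouad's lemma to conclude that no algorithm can identify the correct instance with sufficiently high probability, and therefore must incur positive simple regret on at least one of them.

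For the hard family, I would fix an orthonormal basis $e_1, \ldots, e_d$ of $\R^d$ and consider parameters $\theta_\epsilon = (\tau/\sqrt{d}) \sum_{k=1}^d \epsilon_k e_k$ for $\epsilon \in \{-1, +1\}^d$, together with the action sets $\cA_k = \{\alpha e_k, -\alpha e_k\}$ for $k \in [d]$, where $\alpha, \tau \in (0,1]$ are calibration parameters. Then $\|\theta_\epsilon\|_2 = \tau$, all arms lie in $\oball(0,1)$, and the best arm in $\cA_k$ is $\mathrm{sign}(\epsilon_k)\,\alpha e_k$, so any coordinatewise misidentification contributes $2\alpha\tau/\sqrt{d}$ to the simple regret via its max-over-prompts structure.

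For the KL information, start from \eqref{equation_div_expression} and apply \Cref{lemma:inequality_divergences} to pass from the Bernoulli KL to the $\chi^2$-divergence. Under \Cref{assumption:boundedness}, the logits $\theta^T B_t$ lie in $[-2, 2]$, so the denominator $\sigma(q)(1-\sigma(q))$ in the $\chi^2$-divergence is bounded below by $\rme^{-c}$ for an explicit constant $c$; combined with the $\tfrac{1}{4}$-Lipschitzness of $\sigma$, this yields $\kldiv(\P_{\theta_\epsilon}, \P_{\theta_{\epsilon'}}) \leq C \sum_{t=1}^T ((\theta_\epsilon - \theta_{\epsilon'})^T B_t)^2$ for an explicit constant $C$, which is the ultimate source of the $\rme^{-5}$ factor in the theorem. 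For $\epsilon, \epsilon'$ differing in a single coordinate $k$, only queries with $B_t = \pm 2\alpha e_k$ contribute, giving $\kldiv \leq 16 C \alpha^2 \tau^2 N_k / d$, where $N_k$ counts the queries in direction $k$.

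Finally, apply Assouad's lemma to the Hamming loss on the $d$ sign bits. Averaging the per-flip KL over $k$ and using the budget constraint $\sum_k N_k = T$ gives an average KL of order $\alpha^2\tau^2 T/d^2$; by Pinsker, the average total variation is of order $\alpha\tau \sqrt{T}/d$, so choosing $\alpha\tau$ of order $d/\sqrt{T}$ keeps the TV bounded away from $1$, and Assouad then guarantees a constant probability of misidentifying at least one coordinate, producing a lower bound of order $\alpha\tau/\sqrt{d}$ on $\E[\regret]$; the hypothesis $T \geq d^2$ is used precisely so that the saturating choice $\alpha\tau \sim d/\sqrt{T}$ is compatible with the unit-ball constraints $\alpha, \tau \leq 1$. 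The main obstacle is calibrating the construction and tracking the constants through the chain (Bernoulli-to-$\chi^2$, per-flip KL to average KL, Pinsker, Assouad, and conversion to regret) so as to recover precisely $d\rme^{-5}/(4\sqrt{T})$: the factor $\rme^{-5}$ traces back to the boundedness-induced lower bound on $\sigma(q)(1-\sigma(q))$ when the logits are near the boundary of $[-2, 2]$, while matching the full $d$-dependence (rather than the $\sqrt{d}$-dependence produced by the most naive construction above) is the subtle part and is likely achieved by a refined choice of action sets that more tightly exploits the max-over-prompts structure of the simple regret.
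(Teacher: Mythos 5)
Your information-theoretic skeleton matches the paper's: restrict the parameter to a hypercube (the paper uses $\Theta = \{\pm\sqrt{d/T}\}^d$, which is exactly your $\tau = d/\sqrt{T}$ saturation and is where $T \geq d^2$ enters), test each sign coordinatewise, pass from Bernoulli KL to $\chi^2$ via \Cref{lemma:inequality_divergences}, bound $1/(\sigma(q)(1-\sigma(q)))$ by $5$ and use $\sigma' \leq 1/4$ to get $\kldiv \leq 5$ (hence the $\rme^{-5}$), and aggregate the per-coordinate error probabilities. However, there is a genuine gap, and you have correctly located it yourself: your hard instance — $d$ separate prompts $\cA_k = \{\pm\alpha e_k\}$, each probing a single coordinate, combined through the max over prompts — only yields $\regret \gtrsim \alpha\tau/\sqrt{d} = \sqrt{d/T}$, a factor $\sqrt{d}$ short of the claimed $d/(4\rme^{5}\sqrt{T})$. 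The max over prompts cannot accumulate the $d$ independent coordinate errors; it only ever sees one of them.

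The missing idea is to use a \emph{single} prompt whose action set is the full hypercube $\cA = [-1/\sqrt{d}, 1/\sqrt{d}]^d \subseteq \oball(0,1)$. Its optimal arm is the corner $\sgn(\theta^\star)/\sqrt{d}$, so the regret of that one prompt decomposes \emph{additively} over coordinates, $\langle\theta^\star, a^\star - \hat{a}\rangle = \sum_{i=1}^d \theta_i^\star(\sgn(\theta_i^\star)/\sqrt{d} - \hat{a}_i) \geq \sum_{i=1}^d \1\{\sgn(\hat a_i)\neq\sgn(\theta_i^\star)\}\,|\theta_i^\star|/\sqrt{d}$, with each term worth $1/\sqrt{T}$ when the sign is wrong. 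The averaging-hammer step (your Assouad step) then produces a $\theta^\star$ for which $\sum_{i=1}^d \P_{\theta^\star}(\sgn(\hat a_i) \neq \sgn(\theta_i^\star)) \geq d\,\rme^{-5}/4$, and the $d$ errors add up inside a single prompt's regret rather than being collapsed by a max. Two smaller points: (i) the paper uses Bretagnolle--Huber rather than Pinsker for the two-point step, and this matters here, since with $\kldiv \approx 5$ Pinsker gives a total-variation bound exceeding $1$ and is vacuous, whereas $\rme^{-\kldiv}/2 = \rme^{-5}/2$ is not; you would have to shrink $\alpha\tau$ (and hence the bound) to make Pinsker bite. (ii) Your constant-tracking intuition for where $\rme^{-5}$ comes from is essentially right, but the $5$ arises as the product $(5/16)\cdot T \cdot (16/T)$ of the variance bound, the Lipschitz constant squared, and the per-step squared logit gap $16\theta_i^2/d = 16/T$ summed over $T$ rounds — not from logits near the boundary of $[-2,2]$.
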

The self-concordance property of the sigmoid function \citep{bach2010self} is of some importance in this bound since its properties play a role inequalities we work this. This lower bound allows us to claim that our proposed method is close from optimality. To our knowledge, our work is the first contribution with such an evidence of optimality for preferences dataset selection in RLHF.

\section{Related Work}

The extraordinary capabilities of RLHF to fine-tune large language models \citep{brown2020language, bubeck2023sparks, casper2023open} are a reason for the growing attention in the field. The modeling of RLHF as Markov Decision Processes \citep{wang2023rlhf} or bandits is a common assumption and has been proposed for various goals \citep{zhu2024iterative, mehta2023sample}. Our theoretical work involves comparing the feedback given as one action preferred to another, which is directly related to the problem of dueling bandits \citep{yue2012k, gabillon2012best, sui2018advancements}. Some foundations have been laid out to study preference-based and dueling bandits or RL \citep{pacchiano2021dueling, novoseller2020dueling} while other works consider offline RL where the learning does not result from interactions with the environment \citep{zhan2023provable}. As mentioned in the latter, an issue in offline RL is the insufficient coverage of the space with the collected data. Interestingly, this is an issue that we address here through optimal design.

A lot of works compare online to offline RLHF \citep{hu2023aligning, tang2024understanding, cen2024value} but here, we make the \textit{online-offline} distinction for the dataset generation before the reward modeling and policy optimization start; something that always needs to be done offline in practice.

The human feedback in RLHF is usually given as a preference between a pair of generations associated with a same context, hence the link with with contextual dueling bandits \citep{dudik2015contextual}. Our setting considers a binary feedback, which relates it to generalized linear bandits \citep{filippi2010parametric} and more precisely to logistic bandits \citep{lee2024nearly, lee2024improved, abeille2021instance, faury2020improved}. \citet{bengs2022stochastic} provide an interesting setup for contextual dueling bandits but do not provide all the proofs and rely on the work by \citet{vaswani2019old} to bound the distance between their estimate and the true reward vector, although the bounds of \citet{vaswani2019old} hold for an ordinary least square estimator and not a maximum likelihood estimator - a harder task because of the lack of explicit form of the estimator. \citet{saha2021optimal} propose a very interesting approach to transform a contextual dueling bandit setting into a linear contextual linear bandit setting but leverage the iterative structure of the problem to do so, which is not possible in our case. Finally, \citet{gabillon2012best} give important ideas around pure exploration since it is one of the first and most important works in best-arm identification for multi-armed bandits.

A lot of empirical works have been done around the problem of active learning and optimal choice of samples for diverse and informative collection of data \citep{metz2023rlhf}. In an online setup, \citet{chen2024optune} propose an interesting approach to improve the alignment with a reweighing of the generations to improve the collected information while some theoretical foundations have been have already been laid out by \citet{lindner2023algorithmic, wang2023rlhf}. Our work has hope to stand at the crossroad of algorithmic foundations and practical considerations. There are already seminal works in new directions which involve working with off-policy evaluation for preference learning \citep{bhargava2024off} or active learning for choosing teaching examples \citep{wang2021teaching}. The few theoretical attempts in our direction \citep{das2024provably, ji2024reinforcement} consider an active learning setting where the selection of the sample pairs is done concomitantly with the received feedback (\textit{online setup}), which is an unrealistic assumption due to the practical operation of the work with human labelers. Also, they propose interesting methods but without lower bounds nor a deep theoretical analysis.

More generally, recent works consider learning from human preferences, such as \citet{mukherjee2024optimal} where the preference ordering over a list is learnt or \citet{munos2023nash}, where a Nash equilibrium is learnt. Instead of learning the preferences based on a score, they can be learnt with the data being some preference pairs, hence the link with the dueling bandit framework as some theoretical model \citep{yan2022human}.

Finally, we propose a lower bound for our setup. Such bounds already exist for dueling bandits but rely on the sequential structure of the problem and a global regret objective \citep{saha2021optimal, yue2012k, komiyama2015regret}, something that we cannot do with the simple regret that we are looking for. This is why we see our problem as a logistic bandit and go back to the traditional Bretagnolle-Huber inequality \citep{bretagnolle1979estimation} to control the \textit{bad} events and obtain the lower bound for our .
\section{Conclusion}

This paper addresses the problem of selecting pairs of language model generations to present to labelers in order to maximize the information gathered from their feedback. The goal is to develop an efficient strategy for selecting which generations - or \textit{arms} - should be rated to retrieve the most valuable information before fine-tuning the model. To tackle this, we build on the framework of pure exploration in linear contextual dueling bandits, a well-suited approach for the specific task that we are looking for. We operate under several key assumptions: a linear reward; the Bradley-Terry model that governs the preferences between pairs and the boundedness of the action set as well as the reward parameter.

The core of our approach lies in leveraging optimal design techniques, which allow us to strategically choose the \textit{arms}. By doing so, we maximize the information gained from each comparison, making the rating process highly efficient. Furthermore, by applying information-theoretic tools, we derive a lower bound for the performance of our method. Remarkably, this lower bound matches our upper bound up to constant and logarithmic factors, thereby demonstrating the optimality of our approach.

Finally, we highlight that the techniques developed in this work are not only theoretical but also closely related to practical methods used for selecting the pairs and applying RLHF. The results suggest that our procedure can be both practical and highly effective, offering a significant advancement in how LM generations are selected before receiving human feedback preferences.
\onecolumn
\twocolumn
\section*{Acknowledgements}
Funded by the European Union (ERC, Ocean, 101071601). Views and opinions expressed are however those of the author(s) only and do not necessarily reflect those of the European Union or the European Research Council Executive Agency. Neither the European Union nor the granting authority can be held responsible for them.

\bibliographystyle{apalike}
\bibliography{sample}


\newpage
\onecolumn
\appendix

\section{Proofs}
\label{appendix:proofs}

\begin{restatable}{lemma}{boundingsigmoid}\label{lemma:bounding_sigmoid}
    For any $x \in \R, \sigma'(x)>0$ and for any interval of the form $[-\alpha, \beta]$ for $\alpha, \beta >0$, we have that $\sigma'$ is increasing on $[-\alpha, 0]$ and decreasing on $[0, \beta]$.
\end{restatable}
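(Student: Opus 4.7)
The plan is to use the explicit formula for $\sigma'$ and study its sign and monotonicity by computing $\sigma''$. Recall that $\sigma(x) = 1/(1+\mathrm{e}^{-x})$, and a direct computation gives the standard identity
\begin{equation*}
    \sigma'(x) = \frac{\mathrm{e}^{-x}}{(1+\mathrm{e}^{-x})^2} = \sigma(x)(1-\sigma(x)) \eqsp.
\end{equation*}
Since $\sigma(x) \in (0,1)$ for every $x \in \R$, both factors $\sigma(x)$ and $1-\sigma(x)$ are strictly positive, which immediately yields $\sigma'(x) > 0$ for all $x \in \R$.

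For the monotonicity statement, I would differentiate once more. Using the product rule on $\sigma'(x) = \sigma(x)(1-\sigma(x))$,
\begin{equation*}
    \sigma''(x) = \sigma'(x)(1-\sigma(x)) - \sigma(x)\sigma'(x) = \sigma'(x)\bigl(1-2\sigma(x)\bigr) \eqsp.
\end{equation*}
Because $\sigma'(x) > 0$, the sign of $\sigma''(x)$ is governed by the sign of $1-2\sigma(x)$. Since $\sigma$ is strictly increasing with $\sigma(0)=1/2$, we have $1-2\sigma(x) \geq 0$ iff $x \leq 0$, with equality only at $x=0$. Hence $\sigma''(x) \geq 0$ on $(-\infty,0]$ and $\sigma''(x) \leq 0$ on $[0,+\infty)$.

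Finally, applying this to any interval $[-\alpha, \beta]$ with $\alpha,\beta > 0$: the nonnegativity of $\sigma''$ on $[-\alpha,0]$ shows $\sigma'$ is (weakly, in fact strictly) increasing there, and the nonpositivity of $\sigma''$ on $[0,\beta]$ shows $\sigma'$ is decreasing there, which is exactly the claim. There is no real obstacle in this proof; it is a direct calculation, and the only thing to be careful about is expressing $\sigma''$ in a form whose sign is transparent, which the identity $\sigma''=\sigma'(1-2\sigma)$ handles cleanly.
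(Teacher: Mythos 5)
Your proof is correct and follows essentially the same route as the paper: compute $\sigma''$ explicitly and read off its sign on each side of $0$. The only difference is cosmetic — you use the identity $\sigma' = \sigma(1-\sigma)$ and $\sigma'' = \sigma'(1-2\sigma)$ rather than the paper's raw exponential expression $\sigma''(x) = \mathrm{e}^{-x}(\mathrm{e}^{-x}-1)/(1+\mathrm{e}^{-x})^3$, which if anything makes the sign analysis slightly cleaner.
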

\begin{proof}[Proof of \Cref{lemma:bounding_sigmoid}]
    Note that for any $x \in \R$ we have
\begin{align*}
    \sigma'(x) = e^{-x}/(1+e^{-x})^2 \;\;\; \text{ and }\;\;\;\sigma''(x) = e^{-x} (e^{-x}-1)/(1+e^{-x})^3 \eqsp,
\end{align*}
and we observe that $\sigma''$ cancels out in $0$, is positive on $\R_-^\star$ and negative on $\R_+^\star$, hence the result.
\end{proof}

\differentiationequationlikelihood*
\begin{proof}[Proof of \Cref{lemma:differentiation_equation_likelihood}]
    As mentioned in the main text, a direct computation gives that
    \begin{align*}
    \cL_t(\{\al_s, \am_s, Y_s\}_{s \in [t-1]}, \theta) & = \sum_{s=1}^{t-1} \left\{Y_s \log(\sigma(\langle \theta, \al_s-\am_s\rangle)) + (1- Y_s)\log(\sigma(-\langle \theta, \al_s-\am_s\rangle))\right\} - \lambda \|\theta\|_2^2 /2 \\
    & = \sum_{s=1}^{t-1}\left\{ Y_s \log\left(\frac{1}{1+\rme^{-\langle \theta, \al_s-\am_s\rangle}}\right) + (1- Y_s)\log\left(\frac{1}{1+\rme^{\langle \theta, \al_s-\am_s\rangle}} \right)\right\} - \lambda \|\theta\|_2^2 /2 \eqsp,
\end{align*}
and \Cref{lemma:bounding_sigmoid} offers an expression for the differential of the sigmoid, which gives that
\begin{align*}
    \nabla_{\theta} \cL_t(\{\al_s, \am_s, Y_s\}_{s \in [t-1]}, \theta) &= \sum_{s=1}^{t-1} \left\{ Y_s \frac{(\al_s-\am_s)\rme^{-\langle \theta, \al_s-\am_s\rangle}(1+\rme^{-\langle \theta, \al_s-\am_s\rangle})^{-2}}{(1+\rme^{-\langle \theta, \al_s-\am_s\rangle})^{-1}}\right\} \\
    & \quad + \sum_{s=1}^{t-1}\left\{-(1-Y_s) \frac{(\al_s-\am_s)\rme^{\langle \theta, \al_s-\am_s\rangle}(1+\rme^{\langle \theta, \al_s-\am_s\rangle})^{-2}}{(1+\rme^{\langle \theta, \al_s-\am_s\rangle})^{-1}} \right\} - \lambda \theta \\
    & =  \sum_{s=1}^{t-1}\left\{Y_s(\al_s-\am_s)\frac{\rme^{-\langle \theta, \al_s-\am_s\rangle}}{1+ \rme^{-\langle \theta, \al_s-\am_s\rangle}} +(Y_s-1)(\al_s-\am_s)\frac{\rme^{\langle \theta, \al_s-\am_s\rangle}}{1+ \rme^{\langle \theta, \al_s-\am_s\rangle}} \right\} - \lambda \theta \\
    & = \sum_{s=1}^{t-1}(\al_s-\am_s)\left\{Y_s\frac{1}{1+ \rme^{\langle \theta, \al_s-\am_s\rangle}} +(Y_s-1)\frac{1}{1+ \rme^{-\langle \theta, \al_s-\am_s\rangle}}  \right\} -\lambda \theta \\
    & = \sum_{s=1}^{t-1}(\al_s-\am_s)\left\{Y_s(1-\P_{\theta}(Y_s=1)) + (Y_s-1)\P_{\theta}(Y_s=1) \right\} - \lambda \theta \\
    & =  \sum_{s=1}^{t-1}(\al_s-\am_s)(Y_s -\P_{\theta}(Y_s=1))- \lambda \theta \eqsp,
\end{align*}
hence the result. By definition of the maximum likelihood estimator, $\hat{\theta}_t$ satisfies $\nabla_{\theta} \cL_t(\{\al_s, \am_s, Y_s\}_{s \in [t-1]}, \hat{\theta_t})=0$, and therefore we obtain \eqref{equation:solution_hat_theta}.
\end{proof}

\lemmaconcentrationmleestimator*
\begin{proof}[Proof of \Cref{lemma_concentration_mle_estimator}]
    Our proof is inspired by results from \citet{di2023variance, ji2024reinforcement}. For any $t \in [T+1]$, recall the definition of $H_t$; we now define
    \begin{align*}
        & X_t = Y_t - \P(Y_t=1) = \1(\al_t \succ \am_t) - \sigma(\langle \theta^\star, \al_t-\am_t\rangle) \;\; \text{  and  } \;\; Z_t = \sum_{s=1}^{t-1} X_s (\al_s - \am_s) \eqsp. 
    \end{align*}
As we saw in \eqref{equation:H_for_hattheta}, by definition of the maximum likelihood estimator, and $\theta^\star$ as the true reward parameter, we have that
\begin{align*}
    & H_t(\thmle) = \sum_{s=1}^{t-1} Y_s(\al_s-\am_s) \; \; \text{  and  } \; \;  H_t(\theta^\star) = \lambda \theta^\star + \sum_{s=1}^{t-1}\P(Y_s=1)(\al_s-\am_s) \eqsp,
\end{align*}
where $\P$ stands for the true reward distribution, according to the parameter $\theta^\star$. Therefore
\begin{equation}\label{cqslkjgsqlrk}
    H_t(\thmle) - H_t(\theta^\star) = \sum_{s=1}^{t-1}[Y_s-\P(Y_s=1)](\al_s-\am_s) - \lambda \theta^\star = Z_t - \lambda \theta^\star \eqsp.
\end{equation}
We now consider the difference $H_t(\theta_1) - H_t(\theta_2)$ for arbitrary $\theta_1, \theta_2$ in $\R^d$, and apply a first order Taylor expansion with integral remainder to the function $\theta \mapsto \sigma(\langle \theta, \al_s-\am_s\rangle)$ on the space $\R^d$ in each term of the sum, which leads to
\begin{align*}
    H_t(\theta_1) - H_t(\theta_2)
    & = \lambda \theta_1 - \lambda \theta_2 + \sum_{s=1}^{t-1}(\sigma(\theta_1^T(\al_s - \am_s))-\sigma(\theta_2^T(\al_s - \am_s)))(\al_s-\am_s) \\
    & = \lambda (\theta_1 - \theta_2) + \sum_{s=1}^{t-1} (\al_s-\am_s) \int_{u=0}^1 (\al_s-\am_s)^T\sigma'(\langle \theta_1 + u(\theta_2-\theta_1), \al_s-\am_s\rangle)(\theta_1-\theta_2) du \\
    & = \left[\lambda I + \sum_{s=1}^{t-1} (\al_s-\am_s) (\al_s-\am_s)^T \int_{u=0}^1 \sigma'(\langle \theta_1 + u(\theta_2-\theta_1), \al_s-\am_s\rangle) du \right](\theta_1-\theta_2) \eqsp.
\end{align*}
We define for any $t \in [T], \theta_1, \theta_2 \in \R^d$
\begin{equation*}
    G_t(\theta_1, \theta_2) = \lambda I + \sum_{s=1}^{t-1} (\al_s-\am_s) (\al_s-\am_s)^T \int_{u=0}^1 \underbrace{\sigma'(\langle \theta_2 + u(\theta_1-\theta_2), \al_s-\am_s\rangle)}_{\geq 0} du \succ 0 \eqsp,
\end{equation*}
since $\lambda$ is chosen such that $\lambda >0$. Note that for any $t, \theta_1, \theta_2$: $G_t(\theta_1, \theta_2)$ is symmetric. By definition, we have that for any $\theta_1, \theta_2 \in \R^d$
\begin{align*}
    H_t(\theta_1) - H_t(\theta_2) = G_t(\theta_1, \theta_2) (\theta_1 - \theta_2) \eqsp,
\end{align*}
which gives that
\begin{equation}\label{sfzgf}
    \|\theta_1 - \theta_2\|_{G_t(\theta_1, \theta_2)} = \sqrt{ (\theta_1, \theta_2)\, G_t \, G_t^{-1} \, G_t \,(\theta_1, \theta_2)} = \|H_t(\theta_1) - H_t(\theta_2)\|_{G_t^{-1}(\theta_1, \theta_2)} \eqsp.
\end{equation}

Using \Cref{lemma:bounding_sigmoid}, since $\sigma'(-2) \geq 0.1$ and $\sigma'(2)\geq 0.1$, we have that for any $x \in [-2, 2], \sigma'(x) \geq 0.1$. Note that for any $s \in [T], \theta \in \oball(0,1), \langle \theta, \al_s-\am_s\rangle \in [-2, 2]$. Therefore, under \Cref{assumption:boundedness}, a convexity argument gives that for any $\theta \in \oball(0,1), u\in [0,1], \ps{\theta + u(\theta^\star-\theta)}{\al_s-\am_s} \in [-2,2]$, and we obtain
\begin{align*}
    \int_{u=0}^1 \sigma'(\langle \pmle + u(\theta^\star-\pmle), \al_s-\am_s\rangle) du \geq 0.1 \eqsp,
\end{align*}
since $\pmle \in \oball(0,1)$. Note that we use here the fact that $\pmle$ lies in the unit ball to control the boundedness of the sigmoid function. This leads to
\begin{equation}\label{qsdjgs}
    V_t \prec 10 \, G_t(\theta^\star, \pmle) \;\; \text{  and  } \;\; G_t(\theta^\star, \pmle)^{-1} \preceq 10 \left(\lambda I + \sum_{s=1}^{t-1}(\al_s - \am_s) (\al_s-\am_s)^T \right)^{-1} = 10 \, V_{t}^{-1} \eqsp,
\end{equation}
and we obtain
\begin{align*}
    \|\theta^\star -\pmle\|_{V_{t}} & \leq \sqrt{10}\|\theta^\star - \pmle\|_{G_t(\theta^\star, \pmle)} \\
    & = \sqrt{10} \, \|H_t(\theta^\star) - H_t(\pmle) \|_{G_t^{-1}(\theta^\star, \pmle)} \\
    & \leq 10 \|H_t(\pmle)-H_t(\theta^\star)\|_{V_t^{-1}} \\
    & \leq 10(\|H_t(\hat{\theta}_t)-H_t(\theta^\star)\|_{V_t^{-1}} + \|H_t(\pmle)-H_t(\hat{\theta}_t)\|_{V_t^{-1}}) \\
    & \leq 20 \|H_t(\hat{\theta}_t) -H_t(\theta^\star)\|_{V_t^{-1}} \\
    & = 20 \norm{Z_t - \lambda \theta^\star}_{V_t^{-1}} \\ & \leq 20 (\|Z_t\|_{V_t^{-1}} + \sqrt{\lambda}) \eqsp,
\end{align*}
where the second line holds by \eqref{sfzgf}, the third line by \Cref{qsdjgs}, the fourth by the triangular inequality, the fifth by definition of $\pmle$ \eqref{equation:definition_projected_mle} and the penultimate by \eqref{cqslkjgsqlrk}. Note that for any $t \in [T]$, $X_t \in [-1,1]$ and is therefore $1$-subgaussian by Hoeffding inequality. Therefore, we apply \citet[][Theorem 1]{abbasi2011improved}, which gives that with probability at least $1-\delta$, we have
\begin{align*}
    \norm{Z_t}_{V_t^{-1}}^2 = \left\|\sum_{s=1}^{t-1} X_s (\al_s-\am_s) \right\|^2_{V_{t}^{-1}} &\leq 2 \log \left( \sqrt{\det(V_{t} )} / (\sqrt{\det(V_0)} \delta)\right) \leq 2 \log(1/\delta) + \log( \det V_{t}/\det V_0) \eqsp.
\end{align*}
By definition, we have that for any $t \geq 1$
\begin{align*}
    V_{t} = V_0 +\sum_{s=1}^{t-1} (\al_s - \am_s)(\al_s - \am_s)^T \eqsp,
\end{align*}
and now using \citet[][Lemma 19.4]{lattimore2020bandit}, we can write
\begin{align*}
    \log( \det V_{t}/\det V_0) \leq d \log\left(\frac{\TR(V_0) + 4 (t-1)}{d \det(V_0)^{1/d}}\right) = d \log\left(\frac{d \lambda +4(t-1)}{d \lambda^{1/d} }\right) \eqsp.
\end{align*}
We finally obtain that with probability at least $1-\delta$
\begin{align*}
    \|\pmle- \theta^\star\|_{V_{t}} \leq 20 \left[\sqrt{2 \log (1/\delta) + d\, \log\left(\lambda^{1-1/d}+4t/d\lambda^{1/d} \right)} + \sqrt{\lambda} \right] \eqsp,
\end{align*}
hence the result.
\end{proof}

\theoremsimpleregretupperbound*
\begin{proof}[Proof of \Cref{theorem:simple_regret_upper_bound}]
The condition $T\geq d(d+1)/2$ ensures that we collect enough points so that the optimal design policy $\hat{\pi}$ satisfies the results from \Cref{theorem:kiefer_wolfowitz}. Using the decomposition of the regret \eqref{equation:decomposition_regret} and \Cref{lemma_concentration_mle_estimator}, we obtain that with a probability at least $1-\delta$
\begin{align}\label{equation:proof_simple_regret_sfkg}
    \regret(T,(\cA_n)_{n \in [N]}, \theta^\star) \leq 20 \left[\sqrt{2 \log (1/\delta) + d\, \log\left(\lambda^{1-1/d}+4 T /d\lambda^{1/d} \right)} + \sqrt{\lambda} \right]
    \max_{(a, a') \in \dini^2}\norm{a-a'}_{V_{T+1}^{-1}} \eqsp.
\end{align}
Let $\hat{\pi}$ be an $1+\epsilon$ approximation of the optimal design policy $\pi^\star$. For any distribution $\pi$, we have $\matopt(\pi) = \sum_{b \in \cB} \pi(b) \, b \, b^T$. The regularized design matrix based on the collected samples from $\hat{\pi}$ is defined as $V_{T+1} = \lambda I + \sum_{t=1}^T B(t)\, B(t)^T = \lambda I + \sum_{b \in \cB} \lceil T \, \hat{\pi}_b\rceil b \, b^T$ with $B(t) = \al_t - \am_t$, since the samples $(\al_t, \am_t)_{t \in [T]}$ are chosen according to $\algodpo$. Therefore, for any $b \in \cB$, we have that
\begin{align*}
    \norm{b}_{V_{T+1}^{-1}}^2 &= b^T \left(\lambda I + \sum_{\Tilde{b} \in \cB} \lceil T \, \hat{\pi}_{\Tilde{b}}\rceil \Tilde{b} \, \Tilde{b}^T\right)^{-1} b \\
    & \leq b \left( \sum_{\Tilde{b} \in \cB} T \, \hat{\pi}_{\Tilde{b}} \Tilde{b} \, \Tilde{b}^T\right)^{-1} b \\
    & = \frac{1}{T} \, b^T \left(\sum_{\Tilde{b} \in \cB} \, \hat{\pi}_{\Tilde{b}} \Tilde{b} \, \Tilde{b}^T\right)^{-1} b \\
    & = \frac{1}{T} \, \norm{b}_{\matopt^{-1}(\hat{\pi})}^2 \\
    & = (1+\epsilon)\, d \, / \, T \eqsp,
\end{align*}
where the last line holds thanks to \Cref{algorithm:frank_wolfe_algorithm} and results on its convergence \citep[see, e.g., ][21.2]{lattimore2020bandit}. It gives that
\begin{align*}
    \max_{(a,a')\in \dini}\norm{a-a'}_{\desmat_{T+1}^{-1}} \leq \sqrt{(1+\epsilon)\, d \, / \, T} \eqsp,
\end{align*}
and plugging this bound in \eqref{equation:proof_simple_regret_sfkg}, we obtain the result.
\end{proof}

\simpleregretupperbound*
\begin{proof}[Proof of \Cref{corollary:simple_regret_upper_bound}]
By \Cref{equation:regret_bound_theorem}, we have that for any $\delta\in (0,1)$
\begin{equation}\label{qzgqsgqerh}
    \regret(T, (\cA_n)_{n \in [N]}, \theta^\star) \leq 20 (1+\epsilon)\, \sqrt{d/T} \left[\sqrt{2 \log (1/\delta) + d\, \log\left(\lambda^{1-1/d}+4T/d\lambda^{1/d} \right)} + \sqrt{\lambda} \right] \eqsp,
\end{equation}
and we now choose $\hat{\pi}$ to be a $3/2$-approximation of the optimal policy $\pi^\star$ as well as $\lambda = 1/d$. Plugging these quantities in \eqref{qzgqsgqerh} gives
\begin{align*}
     \regret(T, (\cA_n)_{n \in [N]}, \theta^\star) & \leq 20 \, \times 3/2 \, \times \sqrt{d/T} \times \left[\sqrt{2 \log (1/\delta) + d\, \log\left(d^{1/d-1}+4 \, T \, d^{1/d}/d \right)} + 1/\sqrt{d} \right] \\
     & = 30\sqrt{d/T} \times \left[\sqrt{2 \log(1/\delta) + d\log((1+4T)/d^{1-1/d})}+1/\sqrt{d} \right] \eqsp,
\end{align*}
hence the first part of the corollary. Observe that since $\regret(T, (\cA_n)_{n \in [N]}, \theta^\star) = \max_{n \in [N]}\max_{a^\star_n \in \cA_n} \ps{\theta^\star}{a_n^\star- \hat{a}_T(\cA_n)}$, we have that $\regret(T, (\cA_n)_{n \in [N]}, \theta^\star)\leq 2$ under \Cref{assumption:boundedness}. Therefore, for any $\mathrm{C}>0$, we can write
    \begin{equation}\label{sfqgqgrg}
        \E\left[\regret(T, (\cA_n)_{n \in [N]}, \theta^\star)\right] \leq \P\left(\regret(T, (\cA_n)_{n \in [N]}, \theta^\star) \leq \mathrm{C} \right) \cdot \mathrm{C} + 2\cdot (1- \P\left(\regret(T, (\cA_n)_{n \in [N]}, \theta^\star) \leq \mathrm{C}\right)) \eqsp,
    \end{equation}
    and we now apply \eqref{sfqgqgrg} with $\mathrm{C} = 30\sqrt{d/T} \times \left[\sqrt{2 \log(1/\delta) + d\log((1+4T)/d^{1-1/d})}+1/\sqrt{d} \right]$ and $\delta = d^{1-1/d}/(4T+1)$. The first part of the corollary that we already proved gives
    \begin{align*}
        \E\left[\regret(T, (\cA_n)_{n \in [N]}, \theta^\star)\right] & \leq (1- d^{1-1/d}/(4T+1)) \times 30 \, \sqrt{d/T} \times \left[\sqrt{2 \log((4T+1)/d^{1-1/d}) + d\log((1+4T)/d^{1-1/d})}+1/\sqrt{d} \right] \\
        & \quad + 2\times d^{1-1/d}/(4T+1) \\
        & \leq 30 \sqrt{d/T}\left[\sqrt{(d+2)\log\left(\frac{4T+1}{d^{1-1/d}}\right)}+1/\sqrt{d}\right] + d/2T \\
        & \leq 30 \frac{d+2}{\sqrt{T}}\sqrt{\log\left(\frac{4T+1}{d^{1-1/d}}\right)}+30/\sqrt{T} + \frac{d}{2T} \\
        & \leq 30 \frac{d+2}{\sqrt{T}}\sqrt{\log\left(\frac{4T+1}{d^{1-1/d}}\right)} + 31/\sqrt{T} \eqsp,
    \end{align*}
    where the last line holds since $T\geq d(d+1)/2$. Hence the second part of the result.
\end{proof}

\lowerboundonline*
\begin{proof}[Proof of \Cref{lemma:lower_bound_online}]
    Consider the space $\R^2$ with an orthonormal basis $(e_1, e_2)$. Suppose that $\theta^\star \in \Theta = \{\pm e_2\}$ and that the action sets are $\cA_1 = \ldots = \cA_{T-1} = \{\pm e_1\}$ and $\cA_T = \{ \pm e_2\}$. After sampling data and preferences, $\alg$ outputs an action $\hat{a}$ and is then evaluated with the simple regret
    \begin{equation*}
        \regret(T, (\cA_n)_{n \in [N]}, \theta) = \max_{t \in [T]} \max_{a \in \cA_t} \langle \theta, a - \hat{a}(\cA_t)\rangle \eqsp,
    \end{equation*}
    and we consider that $\alg$ sampled arms $\{(\al_t, \am_t)\}_{t \in [T]}$. We write $\al_t-\am_t = B_t$. For any $t \in [T-1], B_t \in \text{Span}(e_1)$. Now consider the events $\{\hat{a} = e_2\}$ and $\{\hat{a} = -e_2\}$. We write $\theta = e_2$ and $\theta' = -e_2$. We have $\max_{a \in \oball_2(0,1)} \langle \theta, a \rangle = \max_{a \in \oball_2(0,1)} \langle \theta', a \rangle = 1$. We now use the Bretagnolle-Huber inequality and write
    \begin{align}\label{qzsgfsrg}
         \P_{\theta}(\{\hat{a} &= -e_2\})+\P_{\theta'}(\{\hat{a} = e_2\}) = \P_{\theta}(\{\hat{a} = -e_2\}) + \P_{\theta'}(\{\hat{a} = -e_2\}^\comp) \geq \exp(-\kldiv(\P_{\theta}, \P_{\theta'}))/2 \eqsp,
    \end{align}
    as well as \citet[][15.8]{lattimore2020bandit} to obtain
    \begin{align*}
        \kldiv(\P_{\theta}, \P_{\theta'}) & = \E\left[\sum_{t=1}^T \kldiv(P^{\theta}_{B_t}, P^{\theta'}_{B_t})\right] \\
        & = \E_{\theta}\left[\sum_{t=1}^{T-1} \kldiv(\ber(\sigma(\theta^T B_t)), \ber(\sigma(\theta'^T B_t))) \right] + \E_{\theta}[\kldiv(\ber(\sigma(\theta^T B_T)), \ber(\sigma(\theta'^T B_T)))] \\
        & = \E_{\theta}[\kldiv(\ber(\sigma(\theta^T B_T)), \ber(\sigma(\theta'^T B_T)))] \eqsp,
    \end{align*}
    since $\theta, \theta' \in$ Span$(e_2), B_t \in$ Span$(e_1)$ for any $t \in [T-1]$, which gives $\theta'^T B_t =  \theta^T B_t =0$. Therefore $\kldiv(\ber(\sigma(\theta^T B_t)), \ber(\sigma(\theta'^T B_t))) = \kldiv(\ber(1/2), \ber(1/2))=0$ for any $t \in [T-1]$. There exists a constant $c>0$ independent of $T$ and the dimension such that $\kldiv(\ber(\sigma(\theta^T B_T)), \ber(\sigma(\theta'^T B_T))) \leq c$. Thus, at least one of the terms in the left-hand side of \eqref{qzsgfsrg} is bigger than $\exp(-c)/4$ - say $\P_{\theta'}(\{\hat{a} = e_2\})\geq \exp(-c)/4$. Which one being bigger than $\exp(-c)/4$ does not matter by symmetry. The regret incurred under $\theta'$ for $\{\hat{a} = e_2\}$ holding is $2$ and we finally obtain that
    \begin{equation*}
        \regret_{\alg}(T,(\cA_t)_{t \in [T]}, \theta') \geq \rme^{-c}/2 \eqsp.
    \end{equation*}
\end{proof}

\inequalitydivergences*
\begin{proof}[Proof of \Cref{lemma:inequality_divergences}]
    By definition of the KL-divergence, we can write
    \begin{equation*}
        \kldiv(\P, \Q) = \int_{\cX} \log\left(\frac{d\P}{d\Q}\right) d\P \eqsp,
    \end{equation*}
    and applying Jensen's inequality with the logarithm, we obtain
    \begin{equation*}
        \kldiv(\P, \Q) \leq \log \left(\int_{\cX} \frac{d\P}{d\Q}d\P\right) =  \log \left(\int_{\cX} \left(\frac{d\P}{d\Q} \right)^2d\Q\right) =\log \left(\int_{\cX} \left(\frac{d\P}{d\Q} \right)^2d\Q - 1 +1\right) = \log(\chidiv+1) \eqsp.
    \end{equation*}
    Finally, using the inequality $\log (1+x) \leq x$ for any $x > -1$ allows us to conclude.
\end{proof}

\lowerbound*
\begin{proof}[Proof of \Cref{theorem:lower_bound}]
    We first restrict $\theta$ to belong to the set $\Theta = \{\pm \sqrt{d/T}\}^d \subseteq \oball(0,1)$. Let $i \in [d]$ and $\theta, \theta' \in \Theta$ such that for any $j \in [d], j\ne i, \theta_j = \theta_j'$ and $\theta_i' = -\theta_i$. For any prediction $\hat{a}$ output by $\alg$, we define the event
    \begin{equation*}
        A_{i, \theta} = \{\sgn(\hat{a}_i) = -\sgn(\theta_i)\} \eqsp,
    \end{equation*}
    as well as the corresponding probability
    \begin{equation*}
        p(\theta, i) =  \P_{\theta}(A_{i, \theta}) = \P_{\theta}(\{\sgn(\hat{a}_i) = -\sgn(\theta_i)\}) \eqsp.
    \end{equation*}
    Consider the action set $\cA = [\pm 1/\sqrt{d}]^d$. Note that $A_{i, \theta}^\comp =  \{\sgn(\hat{a}_i) = \sgn(\theta_i)\} =  \{\sgn(\hat{a}_i) = -\sgn(\theta_i')\}$. We now apply Bretagnolle-Huber's inequality to obtain
    \begin{equation}\label{fqsgsrgzgaq}
        \P_{\theta}(A_{i, \theta}) + \P_{\theta'}(A_{i, \theta}^\comp) \geq \exp(-\kldiv(\P_{\theta}, \P_{\theta'}))/2 \eqsp.
    \end{equation}
    Now using the expression of the divergence from \eqref{equation_div_expression} as well as \Cref{lemma:inequality_divergences}, we can write
    \begin{align*}
        \kldiv(\P_{\theta}, \P_{\theta'}) \leq \sum_{t=1}^T \E_{\theta}\left[\chidiv(\ber(\sigma(\theta^T B_t)), \ber(\sigma(\theta^{' T} B_t))) \right] \eqsp.
    \end{align*}
    Since $\chidiv(\ber(p), \ber(q)) = (p-q)^2/q^2$, we can write
    \begin{align*}
        \kldiv(\P_{\theta}, \P_{\theta'}) \leq \sum_{t=1}^T \E_{\theta}\left[(\sigma(\theta^T B_t) - \sigma(\theta^{' T}B_t))^2/\sigma(\theta^{' T}B_t)(1-\sigma(\theta^{' T}B_t))\right] \eqsp.
    \end{align*}
    For any $x \in \R^d, \sigma(x) = 1/(1+\rme^{-x})$, which gives that $1/\sigma(x) (1-\sigma(x)) = \rme^{x}(1+\rme^{-x})^2$ and we define $f \colon \R \to \R, x \mapsto \rme^{x}(1+\rme^{-x})^2$. A derivation shows that $f'$ cancels out in $0$, is negative on $\R_-$ and positive on $\R_+$. Therefore, for any $x \in [-1/2, 1/2], f(x) \leq \max\{f(-1/2), f(1/2)\} \leq 5$.
    \newline
    \newline
    We now define $g\colon [0,1] \to \R, v \mapsto \sigma(\theta^{' T} B_t +v(\theta - \theta^{' T})^T B_t)$. We have that $g(1) = \sigma(\theta^T B_t)$ while $g(0) = \sigma(\theta^{' T} B_t)$, which allows us to write
    \begin{align*}
        \sigma(\theta^T B_t) - \sigma(\theta^{' T} B_t) & = g(1) - g(0) \\
        & = \int_{u=0}^1 (\theta - \theta^{'})^T B_t \; \sigma'(\theta^{' T} B_t +v(\theta - \theta^{' T})^T B_t) \rmd v \\
        & = \int_{u=0}^1 \sigma'(\theta^{' T} B_t +v(\theta - \theta^{' T})^T B_t) \rmd v \; \; (\theta - \theta^{'})^T B_t  \eqsp.
    \end{align*}
    As we showed in the proof of \Cref{theorem:simple_regret_upper_bound}, we have that for any $x \in [-2, 2], \sigma'(x) \leq \sigma'(0) = 1/4$ and therefore, we obtain
    \begin{align*}
        \sigma(\theta^T B_t) - \sigma(\theta^{' T} B_t) \leq (\theta - \theta^{'})^T B_t / 4 \eqsp.
    \end{align*}
    Plugging the different inequalities together gives that
    \begin{align*}
    \kldiv(\P_{\theta}, \P_{\theta'}) & \leq \sum_{t=1}^T \E_{\theta}\left[5 ((\theta - \theta^{'})^T B_t)^2/16 \right] \\
    & = 5/16 \sum_{t=1}^T \E_{\theta}\left[((\theta - \theta^{'})^T B_t)^2 \right] \\
    & = 5/16 \sum_{t=1}^T 16 \; \theta_i^2/d \\
    & = 5 \eqsp,
    \end{align*}
    where the last penultimate line holds since $\|B_t\|_{\infty}^2 \leq 4/d$ because of $\cA \subseteq [\pm 1/\sqrt{d}]^d$ and $\theta-\theta' = 2\theta_i e_i$ where $e_i$ stands for the $i$-th basis vector. The last line holds since $\theta_i \in \{\pm \sqrt{d/T}\}$. Finally, plugging this inequality in \eqref{fqsgsrgzgaq} gives that
    \begin{equation}\label{qsfjqsgl}
        p(\theta,i) + p(\theta',i) \geq \rme^{-5}/2 \eqsp.
    \end{equation}
    We now apply the "averaging hammer" technique, which consists in summing all the $p(\theta,i)$ for $\theta \in \Theta, i \in [d]$ and group the term that differ in only one coordinate. It gives that
    \begin{align*}
        \sum_{\theta \in \Theta} 1/|\Theta| \sum_{i=1}^d p(\theta,i) = 1/|\Theta| \sum_{i=1}^d \sum_{\theta \in \Theta} p(\theta,i) \eqsp,
    \end{align*}
    and we reckon that $2^{d-1}$ pairs appear as in \eqref{qsfjqsgl}, which gives that
    \begin{align*}
        \sum_{\theta \in \Theta} 1/|\Theta| \sum_{i=1}^d p(\theta,i) \geq 1/|\Theta| \sum_{i=1}^d 2^{d-1} \rme^{-5}/2 = d \; \rme^{-5}/4 \eqsp,
    \end{align*}
    since $\card(\Theta) = 2^d$ (hypercube). Therefore, there exists at least on $\theta^\star \in \Theta$ such that $\sum_{i=1}^d p(\theta,i) \geq d \rme^{-5}/4$. Still considering the action set $\cA = [-1/\sqrt{d}, 1/\sqrt{d}]^d \subseteq \oball(0,1)$, we can lower bound the regret
    \begin{align*}
        \regret(T, (\cA_n)_{n \in [N]}, \theta^\star) & \geq \E_{\theta^\star}\left[\sum_{i=1}^d (\sgn(\theta_i^\star)/\sqrt{d}-\hat{a}_i)\theta_i^\star \right] \\
        & \geq \sum_{i=1}^d \P_{\theta^\star}(\sgn(\theta_i^\star) \ne \sgn(\hat{a}_i))|\theta_i^\star|/\sqrt{d} \\
        & = \frac{1}{\sqrt{T}} \sum_{i=1}^d \P_{\theta^\star}(\sgn(\theta_i^\star) \ne \sgn(\hat{a}_i)) \\
        & \geq d \, \rme^{-5}/(4 \sqrt{T}) \eqsp.
    \end{align*}
\end{proof}

\section{Supplementary theorems and algorithms}
\label{appendix:algorithms}

\begin{theorem}[Kiefer-Wolfowitz]\label{theorem:kiefer_wolfowitz}
    Assume that the action set $\cB$ is such that Span$(\cB) = \R^d$. Since $\cB \subseteq \oball(0,2)$ and $\cB$ is finite, $\cB$ is compact. Therefore, the following are equivalent
    \begin{itemize}
        \item $\pi^\star$ is a minimizer of $g$,
        \item $\pi^\star$ is a maximizer of $\pi \mapsto \log \det \matopt(\pi)$,
        \item $g(\pi^\star) = d$,
    \end{itemize}
    where the quantities $g$ and $\matopt$ are defined in \eqref{equation:definition_V_and_g}. Furthermore, there exists such a $\pi^\star$ with a support of size smaller than $d(d+1)/2$.
\end{theorem}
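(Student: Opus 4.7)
The plan is to prove the three equivalences using Lagrangian/KKT optimality conditions for the concave program $\max_\pi \log\det V(\pi)$, and then obtain the support bound by a Carathéodory-type reduction argument in the space of symmetric matrices.

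First I would set up the variational framework. Because Span$(\cB) = \R^d$, the matrix $V(\pi) = \sum_{b\in\cB} \pi(b)\, b b^T$ is positive definite on the relative interior of the simplex $\Delta(\cB)$, and the map $\pi \mapsto \log\det V(\pi)$ is concave and continuous (and in fact strictly concave on its effective domain). Since $\cB$ is finite and $\Delta(\cB)$ is compact, a maximizer exists. The key computation is the directional derivative: for any $b\in\cB$ and any $\pi$ in the relative interior,
\begin{equation*}
\partial_b \log\det V(\pi) = \TR\bigl(V(\pi)^{-1} b b^T\bigr) = \norm{b}_{V(\pi)^{-1}}^2.
\end{equation*}

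Next I would derive the KKT conditions. Writing the Lagrangian with the constraint $\sum_b \pi(b) = 1$, a point $\pi^\star$ maximizes $\log\det V$ iff there exists $\mu \in \R$ with $\norm{b}_{V(\pi^\star)^{-1}}^2 \le \mu$ for every $b\in\cB$ and equality whenever $\pi^\star(b) > 0$ (these are both necessary and sufficient by concavity). The Lagrange multiplier is identified by the trace identity
\begin{equation*}
\sum_{b\in\cB} \pi^\star(b)\, \norm{b}_{V(\pi^\star)^{-1}}^2 = \TR\!\Bigl(V(\pi^\star)^{-1}\!\sum_b \pi^\star(b)\, b b^T\Bigr) = \TR(I_d) = d,
\end{equation*}
so $\mu = d$ and therefore $g(\pi^\star) = d$. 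Conversely, the same identity applied to any $\pi$ yields $\max_b \norm{b}_{V(\pi)^{-1}}^2 \ge d$, so $g(\pi) \ge d$ for every $\pi$; achieving $g(\pi^\star) = d$ thus simultaneously shows $\pi^\star$ is a $g$-minimizer and forces $\norm{b}_{V(\pi^\star)^{-1}}^2 = d$ on the support, recovering the KKT conditions and hence the $\log\det$-maximality. This closes the three-way equivalence.

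For the support bound I would use a Carathéodory-style reduction. The rank-one matrices $\{b b^T : b \in \cB\}$ live in the space $\mathrm{Sym}(d)$ of symmetric $d \times d$ matrices, which has dimension $d(d+1)/2$. Given any $\log\det$-optimal $\pi^\star$, consider a minimizer of $|\mathrm{supp}(\pi^\star)|$ among all optima. If the support had size strictly greater than $d(d+1)/2$, the family $\{b b^T\}_{b \in \mathrm{supp}(\pi^\star)}$ would be affinely dependent in $\mathrm{Sym}(d)$, so there would exist coefficients $(\alpha_b)$ not all zero with $\sum_b \alpha_b = 0$ and $\sum_b \alpha_b\, b b^T = 0$. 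Translating $\pi^\star$ by $t(\alpha_b)$ keeps both $V(\pi^\star)$ and the total mass unchanged, and increasing $|t|$ until some component hits zero produces another optimal design with strictly smaller support, a contradiction. Hence some optimal $\pi^\star$ has $|\mathrm{supp}(\pi^\star)| \le d(d+1)/2$.

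The main obstacle I anticipate is being careful with the non-smoothness introduced by the $\max$ in $g$ and with the boundary of the simplex: the $\log\det$ function blows up when $V(\pi)$ becomes singular, so the argument for directional derivatives and KKT should be carried out on the relative interior and then extended, using the assumption $\mathrm{Span}(\cB) = \R^d$ to guarantee that the supremum is attained at a point where $V(\pi^\star) \succ 0$. Once that is handled, the rest is matrix-calculus bookkeeping and the standard Carathéodory argument above.
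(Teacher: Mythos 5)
The paper does not actually prove this statement: it is the classical Kiefer--Wolfowitz general equivalence theorem, stated in the appendix and imported from the optimal-design literature (it is Theorem~21.1 in \citet{lattimore2020bandit}), so there is no in-paper argument to compare against. Your proposal is essentially the standard textbook proof: the trace identity $\sum_{b}\pi(b)\norm{b}^2_{\matopt(\pi)^{-1}}=\TR(I_d)=d$ gives $g(\pi)\geq d$ for every nondegenerate design, the first-order/KKT conditions for the concave objective $\pi\mapsto\log\det\matopt(\pi)$ give $\norm{b}^2_{\matopt(\pi^\star)^{-1}}\leq d$ with equality on the support, and concavity makes the first-order condition sufficient, which closes the three-way equivalence. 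This part is correct and complete, modulo the boundary caveat you already flag ($g=+\infty$ when $\matopt(\pi)$ is singular), and with one cosmetic remark: $\pi\mapsto\log\det\matopt(\pi)$ is concave but not strictly concave in $\pi$ in general, since $\pi\mapsto\matopt(\pi)$ need not be injective; you never use strictness, so this is harmless.

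The one genuine slip is in the support-reduction step. A family of more than $d(d+1)/2$ matrices in the $d(d+1)/2$-dimensional space of symmetric matrices is guaranteed to be \emph{linearly} dependent, not \emph{affinely} dependent (affine dependence is only automatic once the number of points exceeds the dimension by two), so you cannot assert $\sum_b\alpha_b=0$ directly from a cardinality count. The standard fix is one line: take $(\alpha_b)$, not all zero, with $\sum_b\alpha_b\,bb^T=0$, and use the equality $\norm{b}^2_{\matopt(\pi^\star)^{-1}}=d$ on the support (which you have already established) to write $d\sum_b\alpha_b=\TR\bigl(\matopt(\pi^\star)^{-1}\sum_b\alpha_b\,bb^T\bigr)=0$, hence $\sum_b\alpha_b=0$. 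With that correction, the perturbation $\pi^\star+t\alpha$ preserves both $\matopt(\pi^\star)$ and the total mass, and pushing $t$ until a coordinate of the support vanishes completes your Carath\'eodory-type reduction exactly as you intended.
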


\begin{algorithm}[!ht]
\caption{$\algfrankwolfe$: \texttt{Frank-Wolfe Algorithm}}\label{algorithm:frank_wolfe_algorithm}
\begin{algorithmic}[1]
    \State {\bfseries Input:} Set of actions $\cB = \{b_n\}_{l \in [L]}$, initial distribution $\pi_0$ over this set of actions, precision $\epsilon$, regularization parameter $\lambda$.
    \State {\bfseries Compute} $\matopt(\pi_0) = \lambda I, m=0$.
    \While{$g(\hat{\pi}_m) > \sqrt{(1+\epsilon)}d$}
        \State Compute $b_m = \argmax_{b \in \cB} \|b\|_{\matopt(\pi_m)^{-1}}$.
        \State $\gamma_m = \argmax_{\gamma \in [0,1]} \log \det(V((1-\gamma) \pi_m + \gamma \1_{b_m}))$
        \State For any $b \in \cB, \pi_{m+1} (b) = (1-\gamma_m)\pi_m(b) + \gamma_m \1_{b_k}(b)$.
        \State Update $\matopt(\pi_{m+1})$.
    \EndWhile
    \State Output the estimated policy $\hat{\pi} = \pi_m$.
\end{algorithmic}
\end{algorithm}

\end{document}